\newtheorem{prop}{Proposition}
\newcommand{\ie}{i.e., }
\newcommand{\eg}{e.g., }
\newcommand{\cp}[2]{p{\left( #1 \mid #2 \right)}}
\newcommand{\Ex}[2]{\mathbb{E}_{#1}\left\{ #2 \right\}}
\newcommand{\Cat}[1]{{\rm Cat}\left( #1\right)}
\newcommand{\tr}[1]{\mathrm{tr}\left\{ #1 \right\}} 
\newcommand{\Tb}{\mathbf{T}}
\newcommand{\xb}{\mathbf{x}}
\newcommand{\Xb}{\mathbf{X}}
\newcommand{\argmax}{\mathop{\mathrm{argmax}}}
\newcommand{\expp}[1]{\exp\left\{ #1 \right\}}
\newcommand*\diff{\mathop{}\!\mathrm{d}}
\newcommand{\red}[1]{\textcolor{red}{#1}}
\newcommand{\blue}[1]{\textcolor{blue}{#1}}
\def\set@curr@file#1{\def\@curr@file{#1}} 
\title[Additive Deep Hazard Analysis Mixtures]{ADHAM: Additive Deep Hazard Analysis Mixtures for Interpretable Survival Regression}
\author{\Name{Mert Ketenci}
\Email{mk4130@columbia.edu}\\ 
\addr Department of Computer Science\\
Columbia University\\
New York, NY, USA 
\AND
\Name{Vincent Jeanselme}
\Email{vj2292@cumc.columbia.edu}\\ 
\addr Department of Biomedical Informatics\\
Columbia University\\
New York, NY, USA 
\AND
\Name{Harry Reyes Nieva}
\Email{hr2479@cumc.columbia.edu}\\ 
\addr Department of Biomedical Informatics\\
Columbia University\\
New York, NY, USA 
\AND
\Name{Shalmali Joshi}
\Email{sj3261@cumc.columbia.edu}\\ 
\addr Department of Biomedical Informatics\\
Columbia University\\
New York, NY, USA 
\AND
\Name{Noémie Elhadad}
\Email{noemie.elhadad@columbia.edu }\\ 
\addr Department of Biomedical Informatics\\
Columbia University\\
New York, NY, USA } 
\begin{document}

\maketitle

\begin{abstract}
Survival analysis is a fundamental tool for modeling time-to-event outcomes in healthcare. Recent advances have introduced flexible neural network approaches for improved predictive performance. However, most of these models do not provide interpretable insights into the association between exposures and the modeled outcomes, a critical requirement for decision-making in clinical practice. To address this limitation, we propose Additive Deep Hazard Analysis Mixtures (ADHAM)\footnote{Code available at \url{github.com/ketencimert/adham}.}, an interpretable additive survival model. ADHAM assumes a conditional latent structure that defines subgroups, each characterized by a combination of covariate-specific hazard functions. To select the number of subgroups, we introduce a post-training refinement that reduces the number of equivalent latent subgroups by merging similar groups. We perform comprehensive studies to demonstrate ADHAM's interpretability at the population, subgroup, and individual levels. Extensive experiments on real-world datasets show that ADHAM provides novel insights into the association between exposures and outcomes. Further, ADHAM remains on par with existing state-of-the-art survival baselines in terms of predictive performance, offering a scalable and interpretable approach to time-to-event prediction in healthcare.
\end{abstract}

\section{Introduction}

Survival analysis, a subfield of machine learning (ML) and statistics, focuses on modeling time-to-event outcomes (\eg disease progression, hospital readmission, or relapse). In medical settings, the event of interest is often not observed for all patients due to study end, loss to follow-up, or withdrawal, resulting in censored data. This characteristic sets survival analysis apart from regression~\citep{clark2003survival, singh2011survival}. To address this challenge, survival analysis leverages the survival function $S(t \mid \xb) = p(T > t \mid \xb)$---which corresponds to the probability that an individual does not experience the event of interest past time $t$, for model learning~\citep{haider2020effective}. Survival analysis plays a critical role in healthcare, supporting both clinical decision-making and outcome evaluation in trials~\citep{vigano2000survival, fleming2000survival, cole2001polychemotherapy, faucett2002survival, hagar2014survival, perotte2015risk, panahiazar2015using, morita2009combined}.

Despite growing interest in ML for healthcare, clinical adoption of ML survival models remains limited~\citep{abdullah2021review, lu2023importance}. A key limitation of existing methodologies is the lack of interpretability of model predictions~\citep{shortliffe2018clinical, tonekaboni2019clinicians}. Clinicians are unlikely to trust or act on model outputs without a clear understanding of how and why a prediction was made, especially when guiding high-stakes decisions such as individualized care or the development of treatment guidelines~\citep{amann2020explainability}. This challenge is further complicated in survival analysis, where interpretability must account not only for which covariates induce the predictions, but also for how their influence evolves. Unlike standard ML tasks, survival interpretability requires continuous-time explanations that show how risk changes longitudinally. 

The granularity of interpretability in clinical risk models can be broadly categorized into three levels~\citep{ahmad2018interpretable}: (1) Population-level interpretability, which captures how covariates influence outcomes across the entire cohort and helps identify global risk trends~\citep{lou2013accurate}; (2) Subgroup-level interpretability, which uncovers patterns within latent groups of patients who share similar progression profiles and helps with tailored interventions~\citep{bhavnani2022framework}; and (3) Individual-level interpretability, which provides personalized insights into how specific covariates relate to a given patient’s risk over time~\citep{krzyzinski2023survshap}. For clinical relevance, models should ideally provide explanations at the population, subgroup, and individual levels. Yet, methods that support all three levels remain largely unexplored.

The predominant class of interpretable models in survival analysis extends the classical Cox Proportional Hazards (CoxPH) framework~\citep{cox1972regression}, often through the incorporation of Generalized Additive Models (GAMs), a flexible approach that represents the hazard function as a sum of smooth, potentially nonlinear transformations of individual covariates~\citep{jiang2022coxnams, peroni2022extending}. However, a key limitation of additive modeling lies in concurvity~\citep{siems2023curve, kovacs2024feature}, a form of multicollinearity where correlated features obscure each other’s effects, entangling their individual contributions. This phenomenon can lead to explanations that misalign with established physiological mechanisms or clinical expectations~\citep{ramsay2003effect, tan2024association}. For instance, GAMs showed limited clinical interpretability when applied to the Glasgow Coma Score (GCS), as correlations between its components (e.g., verbal, eye response) can distort the model’s explanations~\citep{hegselmann2020evaluation}. In addition, GAMs do not provide subgroup-level interpretability, an essential feature for identifying clinically meaningful heterogeneity and informing treatment guidelines~\citep{bhavnani2022framework}.

To overcome these challenges, we introduce Deep Additive Hazard Mixtures (ADHAM), which combines additive hazard modeling with mixture density networks. ADHAM represents the hazard function as a mixture of subgroup-specific hazard functions. Unlike former additive models, which assume a uniform sum of covariate effects~\citep{jiang2022coxnams, peroni2022extending, xu2023coxnam}, ADHAM learns a weighted combination to combine the hazard associated with each additive component. Importantly, ADHAM decouples the learning of hazard functions from subgroup assignment weights and trains each additive term separately to break the pairwise covariate correlations to address the concurvity problem. This leads to population-level hazard shapes that purely capture the covariate-specific trends in data. 

A key challenge in mixture models is how to select the number of subgroups a priori. The true number of latent subgroups is typically unknown, which leads practitioners to often train multiple models under varying numbers of groups. We address this with a principled, post-training model selection strategy. This allows ADHAM to adapt its subgroup size \emph{a posteriori}, and eliminates the need to train multiple models. Our results show that ADHAM achieves performance comparable to state-of-the-art additive models while offering improved interpretability.

\vspace{2ex}
Our contributions are as follows:

\begin{enumerate}
\item \textbf{Multi-level interpretability:}  
ADHAM provides explanations at the population, subgroup, and individual-levels within a single survival modeling framework. 
\vspace{-0.0ex}
To the best of our knowledge, it is the only survival analysis model that can provide all three levels of explanation at once (Section \ref{sec:model}).
\item \textbf{Overcoming concurvity through decoupled training:} We propose a training strategy for ADHAM that separates the learning of covariate-specific hazard functions from subgroup assignment. This approach mitigates concurvity by ensuring each hazard component is learned independently. Empirically, this leads to stable and reliable interpretability across training runs (Section \ref{sec:param}).
\item \textbf{Efficient model selection:} ADHAM includes a principled \emph{post-training} method to estimate and reduce redundant subgroups. To the best of our knowledge, ADHAM is the only survival analysis method that avoids training multiple models and allows for model selection post-training (Section \ref{sec:refine}). 
\item \textbf{Competitive predictive performance:} ADHAM achieves predictive performance comparable to state-of-the-art interpretable survival models across multiple benchmarks (Section \ref{sec:results}).
\end{enumerate}

\subsection*{Generalizable Insights about Machine Learning in the Context of Healthcare}
In healthcare, ML models should not only exhibit strong predictive performance but also interpretability. This is especially true in survival analysis, where understanding the risk at a given time is as important as surfacing associated predictors with fidelity. Clinicians must see how the risk evolves, how different covariates contribute to the risk, and how these patterns vary across the population, within subgroups, and for individual patients. 
Most ML models fall short in these areas, either acting as black-boxes, tackling a subset of these interpretability levels, or offering explanations that break down when the covariates are correlated. 
Additionally, models that attempt to identify subgroups typically require the true number of subgroups to be specified in advance, which introduces a challenging model selection problem ---often requiring training multiple models, which places an extra burden on practitioners. To address these issues, we introduce ADHAM, a survival analysis model that enables clear explanations at multiple levels, breaks covariate correlation during training to prevent concurvity, and facilitates easy post-training model selection. 

\section{Related Work}

\paragraph{Time-to-event models.} 

Traditional non-parametric survival analysis techniques such as Kaplan-Meier and Nelson-Aalen estimators model population-level survival functions~\citep{kaplan1958nonparametric, nelson1969hazard, aalen1978nonparametric}. These methods do not consider individual patient covariates and thus do not provide insights on the impact of exposure on outcome~\citep{haider2020effective}. To address this limitation, the foundational semi-parametric Cox Proportional Hazards (CoxPH)~\citep{cox1972regression} models the shift induced by individual covariates on a non-parametric population survival. The estimation of the impact of covariates on outcomes offers tailored risk assessments. DeepSurv~\citep{katzman2018deepsurv} extends this method by replacing the parametric relation between covariates and outcomes with a neural network, resulting in improved predictive performance. 

However, to obtain a closed-form likelihood, these approaches rely on the assumption, known as the proportional hazards (PH) assumption~\citep{grambsch1994proportional, hess1995graphical}, that the hazard function of any patient (1) differs from the population average by a constant factor and (2) follows the same trajectory through time. Avoiding this assumption, Cox-Time uses time-dependent conditional neural hazard functions~\citep{kvamme2019time}. Despite its flexibility, this approach introduces bias in parameter estimation. Recently, Deep Hazard Analysis (DHA) introduced an unbiased estimator of the likelihood, allowing unbiased and flexible non-PH survival estimation~\citep{ketenci2023maximum}. 

In another attempt to avoid the PH assumption, previous works aimed to estimate the time-to-event distribution instead of the hazard function. For instance, DeepHit approaches survival as a classification approach by learning probability mass functions over discrete time intervals~\citep{lee2018deephit}. This discretisation may, however, result in inexact likelihood estimates. Alternatively, recent works have proposed monotonic neural networks for estimating this distribution without approximation or discretization of the likelihood~\citep{rindt2022survival, jeanselme2023neural}.

Closer to our work, Mixture Density Networks (MDNs) estimate the complete time-to-event distribution as a mixture of base distributions. MDNs define a mixture of flexible probability density functions where both the density and mixture weights are modeled by neural networks. Given enough mixture components, MDNs can model any probability density~\citep{bishop2006pattern}. Recent approaches to MDNs use neural networks for group and time-to-event distributions. For example, Deep Survival Machines (DSM) uses a mixture of parametric distributions (such as Log-normal and Weibull)~\citep{nagpal2021dsm}. Survival Mixture Density Networks (Survival MDNs) assume a mixture of Gaussian distributions and maximize the associated likelihood after marginalizing latent subgroup assignments~\citep{han2022survival}.

\paragraph{Interpretable time-to-event models.}
Existing interpretable survival models for clinical risk prediction integrate CoxPH's framework with Generalized Additive Models (GAMs). For example, CoxNAM and TimeNAM use Neural Additive Models (NAMs) to model the hazard rate in Cox-Time~\citep{kvamme2019time, agarwal2021neural, jiang2022coxnams, utkin2022survnam, peroni2022extending, xu2023coxnam}. These models offer interpretability by defining a separate per-covariate function. This allows clinicians to visualize how individual covariates contribute to risk at different time points. 

Other lines of work utilize post-hoc interpretability tools. As an example, the SurvLIME explanation method estimates the cumulative hazard function of a black-box model by fitting CoxPH's regression model~\citep{kovalev2020survlime}. SurvSHAP(t) uses KernelSHAP to decompose the survival function to its Shapley values at each time step $t$~\citep{lundberg2017unified, krzyzinski2023survshap}. While effective, these ad-hoc interpretability methods become impractical for large datasets and extended time horizons, as they must be re-executed for every covariate-time combination.

\section{ADHAM: Additive Deep Hazard Analysis Mixtures}\label{sec:adham}
\subsection{Background: Generalized Additive Models}

GAMs are a flexible class of statistical models that extend generalized linear models by allowing for non-linear relationships between the covariates, denoted by a $D$-dimensional vector $\xb \in \mathbb{R}^D$, and the outcome. A $k^\text{th}$ order GAM, $g$, is defined as:

\begin{equation}
   g(\xb) = \sum_{u \subseteq [D] \mid |u| \leq k} g_u(\xb_u),\label{eq:gam}
\end{equation}

where $[D] = \{1,2,\cdots,D\}$. Each $g_u(\xb_{u})$ denotes the contribution of a covariate subset $u$ to the overall prediction. In this work, we adopt the additive decomposition in Equation~\ref{eq:gam} to model the hazard function considering $k=1$. Unlike traditional GAMs, however, we introduce a weighted sum formulation in which the influence of each component $g_u$ is modulated by patient-specific subgroup assignments, thereby enabling subgroup-specific additive effects.

\subsection{Model}\label{sec:model}

Consider a time-to-event dataset with $N$ points, $\mathcal{D} = \{(\mathbf{x}_i, t_i, \delta_i)\}_{i=1}^N$, where $\mathbf{x}_i \in \mathbb{R}^D$ denotes the vector of covariates associated with patient $i$, $t_i \in \mathbb{R}^+$ is the recorded event or censoring time, and $\delta_i \in \{0,1\}$ indicates whether the event occurred ($\delta_i = 1$) or was right-censored ($\delta_i = 0$). With these notations, we assume a latent subgroup membership $c$ dependent on covariates $\xb$, where covariate-weights are group-specific. We describe the data-generating process in Figure \ref{fig:dgp}:

\vspace{5ex}
\begin{minipage}[t]{0.44\textwidth}
\begin{center}
\begin{tikzpicture}
    \node[draw, rectangle, minimum width=6cm, minimum height=2.8cm] (box) at (2.25,0.75) {};

    \node at (0.6,1.8) {$i=\{1,\cdots,N\}$};
    \node[draw, rectangle, minimum width=1cm, minimum height=1cm, fill=gray!30] (x) at (0,0) {$\xb_i$};
    \node[draw, circle] (c) at (1.5,0) {$c_i$};
    \node[draw, circle] (d) at (3,0) {$d_i$};
    \node[draw, circle, fill=gray!30] (t) at (4.5,0) {$t_i$};
    \node[draw, rectangle, minimum width=1cm, minimum height=1cm, fill=gray!30] (delta) at (4.5,1.5) {$\delta_i$};
    
    \draw[->] (x) -- (c);
    \draw[->] (c) -- (d);
    \draw[->] (d) -- (t);
    \draw[<-] (t) -- (delta);
    \draw[->] (x) to[bend left=40] (t);

    \node[circle, fill=black, inner sep=1.5pt, label=below:$\boldsymbol{\beta}$] (beta) at (3,-1) {};
    \node[circle, fill=black, inner sep=1.5pt, label=below:$\theta$] (theta) at (1.5,-1) {};
    \node[circle, fill=black, inner sep=1.5pt, label=below:$\Phi$] (phi) at (4.5,-1) {};
    \draw[->] (theta) -- (c);
    \draw[->] (beta) -- (d);
    \draw[->] (phi) -- (t);
\end{tikzpicture}
\end{center}
\end{minipage}%
\begin{minipage}[t]{0.55\textwidth}
\vspace{-25ex}

$\quad$ For $i = \{1, \cdots, N\}$:
\begin{itemize}
    
    
    
    \item Draw  time-to-event $t_i \sim \cp{t}{\xb_i, \delta_i; \theta, \boldsymbol{\beta}, \Phi}$
    
    with marginal hazard function
    
    $\hspace{-1ex}\sum_{c=1}^C\sum_{d=1}^D \cp{d}{c; \boldsymbol{\beta}}\cp{c}{\xb_i; \theta} \lambda(t \mid x_{id}; \phi_d)$ 
    
    where, $c \mid \xb_i \sim \Cat{f_\theta(\xb_i)}$, 
    
    $ d \mid c \sim \Cat{\boldsymbol{\beta}_c}$, and $\Phi=\{\phi_d\}_{d=1}^D$
\end{itemize}
\end{minipage}
\vspace{0.5ex}
  \begin{minipage}{\linewidth}
    \captionof{figure}{Plate notation and data generating process of ADHAM. $f_\theta$ is subgroup assignment network, $\beta_c$ are subgroup-specific feature importance values, and $ \lambda(t \mid x_{id}; \phi_d)$ is the population-level hazard curve.}
    \label{fig:dgp}
  \end{minipage}

The subgroup assignment network $f_\theta(\xb)$ and associated weight vector $\boldsymbol{\beta}_c$ lie on $C$- and $D$-dimensional simplexes, respectively\footnote{This is ensured by softmax function.}. 
Each patient is assigned to a latent subgroup $c$, with probability $f_{\theta_c}(\xb)$ characterized by weights on each covariate-specific hazard, $\beta_{dc}$. Marginal hazard is then defined as the group-specific weighted sum of covariate-specific hazard functions. We summarize all notations with a table in Appendix \ref{app:notation} and illustrate the methodology with a flow chart in Appendix \ref{app:flow}.

In the following paragraphs, we describe how each term in ADHAM's marginal hazard function links to different levels of interpretability.

 \begin{figure}[H]
\includegraphics[width=\linewidth]{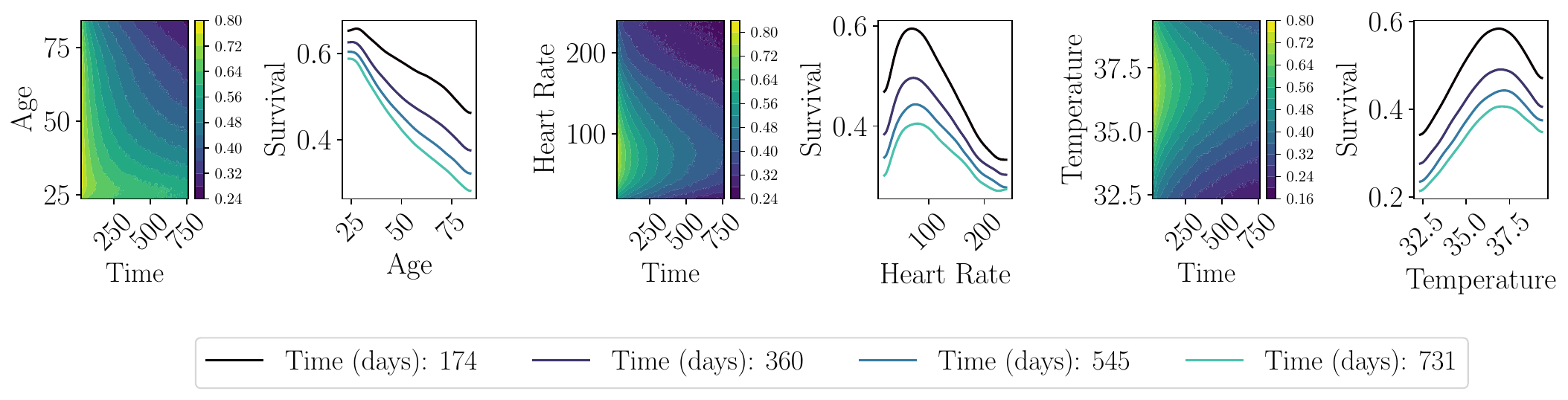}
\caption{Covariate-specific \emph{population-level} survival functions, $\lambda(t \mid x_{id}; \phi_d)$, of ADHAM trained on \textbf{SUPPORT} dataset. ADHAM captures well-known physiological trends in heart rate and temperature. In particular, normal ranges (\eg 36 - 37.5 $^{\circ}$C for temperature and 60 - 100 bpm for heart rate~\citep{tan2024association}) have better survival odds. Similarly, the survival probability decreases with age. In Appendix \ref{app:diffruns}, we compare ADHAM's population-level survival functions to those of TimeNAM's and demonstrate that our results are consistent. 
}\label{fig:motivation}
\end{figure}

\paragraph{Multi-level interpretability.} 


We start with the \emph{population-level} hazard, defined as:

\begin{equation} 
\lambda(t \mid x_d; \phi_d).\label{eq:pophaz} 
\end{equation}

This quantity represents the shared hazard as a function of the covariate $d$. It is the same across the population for the same $x_d$ values. One can either use this quantity directly or calculate the population-level survival function using it. In Figure \ref{fig:motivation}, we illustrate population-level survival functions, derived using Equation \eqref{eq:pophaz}, as also discussed in Appendix \ref{app:multilevel}.

We weigh each hazard function using subgroup-level weights $\boldsymbol{\beta} \in [0,1]^{C \times D}$, where $\sum_d \cp{d}{c; \boldsymbol{\beta}} = \sum_d \beta_{dc} = 1$. Note that, each $\boldsymbol{\beta}_c$ vector informs about \emph{subgroup-level} covariate importances (\eg if covariate $d$ is important for subgroup $c$, then $\beta_{dc}$ is high):

\begin{equation} 
\cp{d}{c; \boldsymbol{\beta}}\lambda(t \mid x_d; \phi_d) =  \beta_{dc} \lambda(t \mid x_d; \phi_d). \end{equation}

We use $\boldsymbol{\beta}$ for \emph{subgroup-level} interpretability and illustrate it, in Figure \ref{fig:clusters}, along with subgroup statistics. 

\begin{figure}[H]
\includegraphics[width=\linewidth]{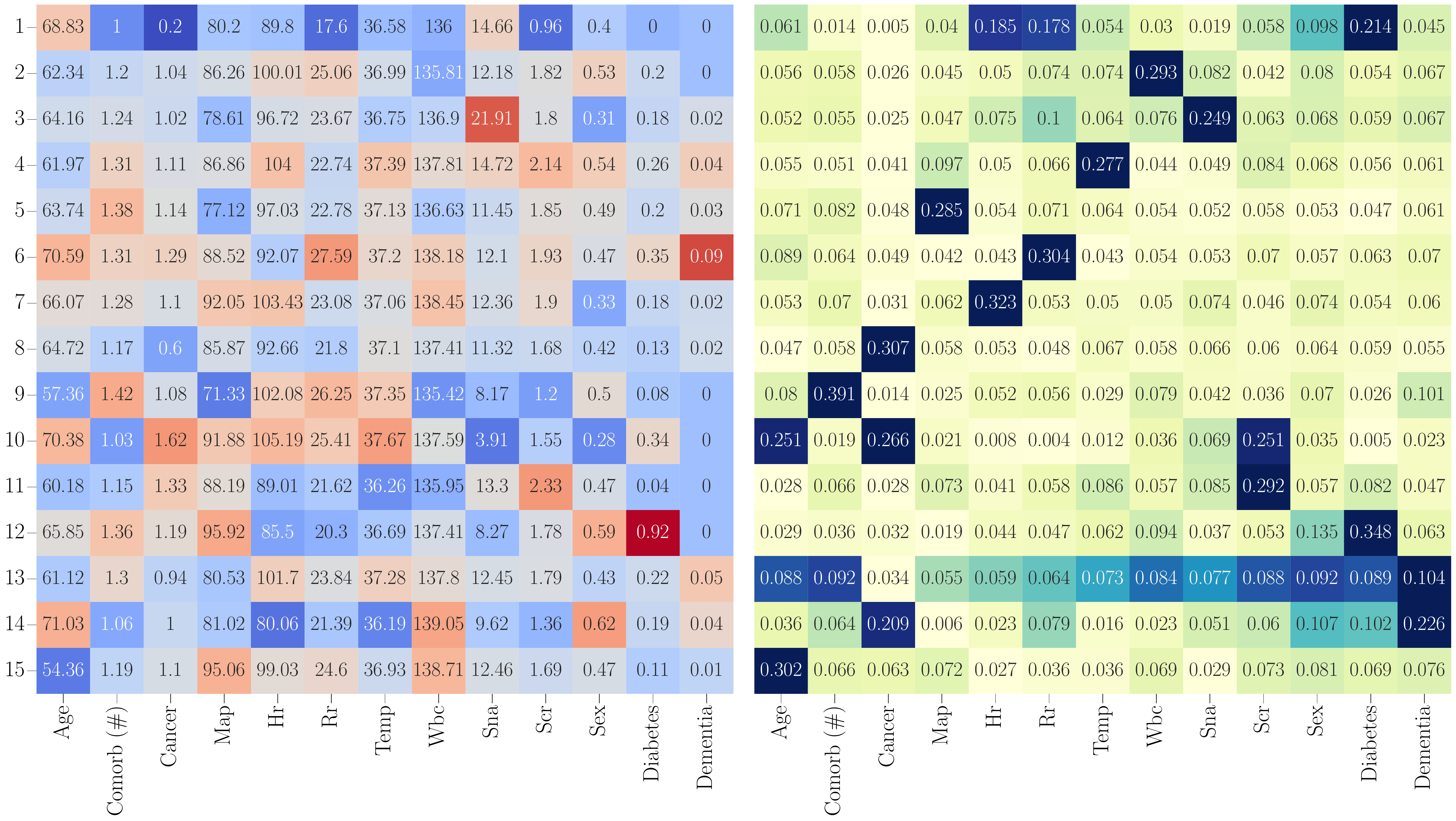}
    \caption{\emph{Subgroup-level} interpretability of \textbf{SUPPORT} dataset. Subgroup-specific average measurements and corresponding ${\beta}_{dc}$ values are provided in \textbf{left} and \textbf{right}, respectively. Each row describes a subgroup. In the left heatmap, \red{warmer} colors indicate values \underline{above population average}, while \blue{cooler} colors reflect \underline{below-average} measurements, with each cell including the corresponding numeric value. See Appendix~\ref{app:covariates} for covariate details. In the right heatmap, darker shades denote higher covariate importance, indicating which covariates drive the hazard in each subgroup. ADHAM consistently identifies covariates that deviate from normal ranges via $\boldsymbol{\beta}$ matrix. For instance, Subgroup 1 shows low heart and respiratory rates, both heavily weighted. Subgroup 2 is marked by low WBC and elevated heart and respiratory rates. Subgroup 4 is dominated by high temperature, while Subgroup 10 highlights age and cancer. Overall, ADHAM effectively identifies clinically relevant risk factors across subgroups.
}\label{fig:clusters}
\end{figure}

For a given patient $\xb$, we compute its group assignment via $\cp{c}{\xb; \theta}$ and define the marginal hazard rate as a sum over \emph{individual-level} hazard functions, each describing a contribution to the marginal patient hazard:

\begin{align} \lambda(t \mid \xb; \theta, \boldsymbol{\beta}, \Phi) &= \sum_{d=1}^D \sum_{c=1}^C {\cp{d}{c; \boldsymbol{\beta}}} \cp{c}{\xb; \theta} \lambda(t \mid x_d; \phi_d)  \\ &= \sum_{d=1}^D \underbrace{\cp{d}{\xb; \theta, \beta} \lambda(t \mid x_d; \phi_d)}_{\text{\emph{individual-level hazard function}}}\\
&= \sum_{d=1}^D \lambda(t \mid \xb; \theta, \boldsymbol{\beta}, \phi_d), \end{align}

where $\cp{d}{\xb; \theta, \beta} = \sum_{c=1}^C \cp{d}{c; \boldsymbol{\beta}}\cp{c}{\xb; \theta} = \sum_{c=1}^C \beta_{dc} f_{\theta_c}(\xb)$ is a function of all covariates $\xb$ and introduces patient-specific covariate interactions into the picture. Note that, $\sum_{c=1}^C f_{\theta_c}(\xb)=1$. We use $\lambda(t \mid \xb; \theta, \boldsymbol{\beta}, \phi_d)$ for \emph{individual-level} interpretability, as demonstrated in Figure \ref{fig:hazard}. 

\begin{figure}[!htb]
 \includegraphics[width=\linewidth]{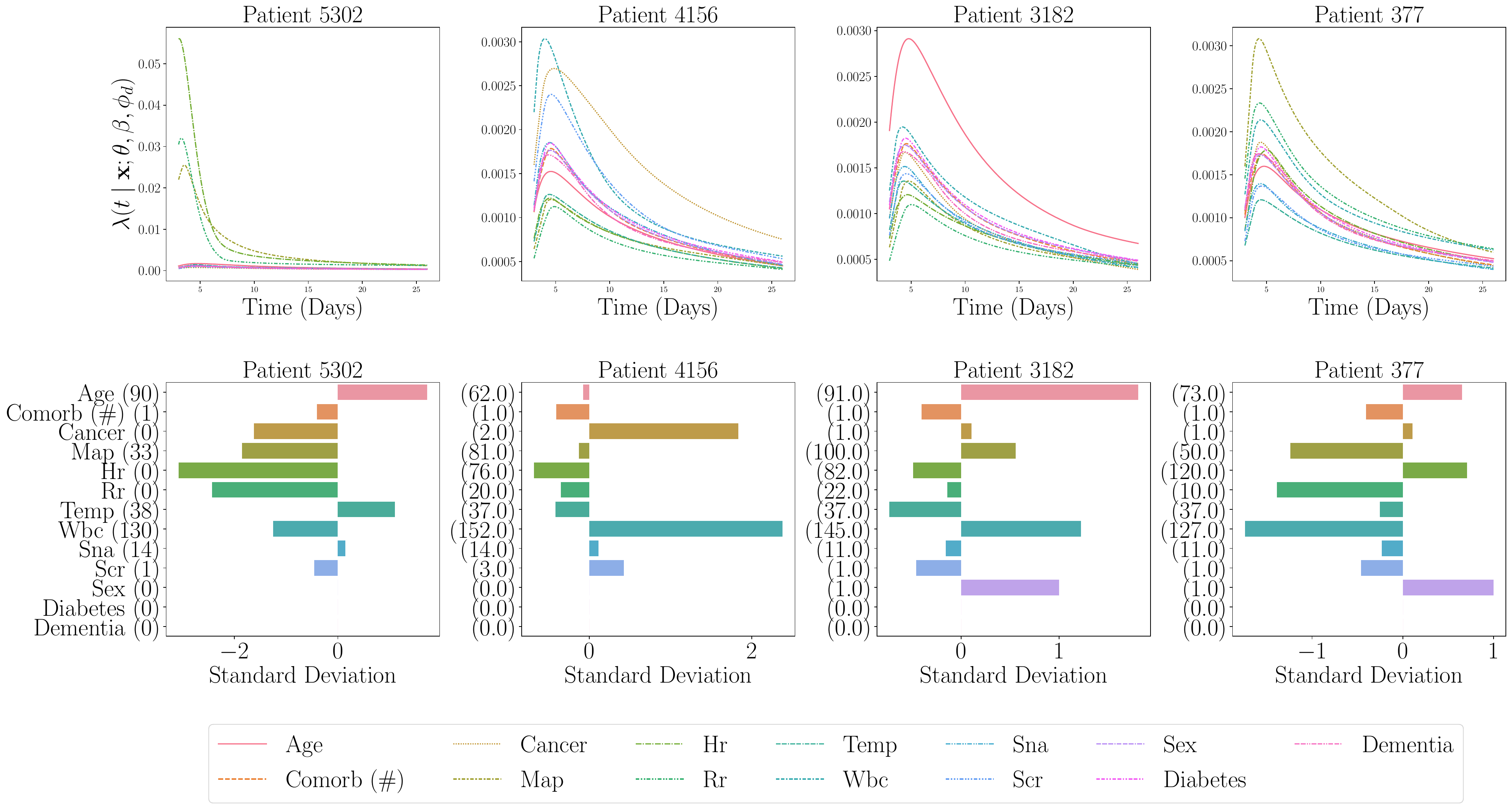}
    \caption{\emph{Individual-level} hazard functions and corresponding input values for four patients. For the first patient, ADHAM identifies an immediate risk driven by low heart rate (Hr), respiratory rate (Rr), and mean arterial pressure (Map), matching the patient’s high short-term risk. For Patient 4156, the model captures both short-term risk from elevated white blood cell (Wbc) count—suggesting possible infection—and long-term risk from advanced-stage cancer. Patient 3182, an older individual with moderately abnormal covariates, shows a hazard profile dominated by age. For Patient 377, ADHAM points to low Map and high Wbc as the key contributors to their risk, which are outside of normal range.
}\label{fig:hazard}
\end{figure}

In this section, we focused on the hazard function. Similarly, one can study the different levels of interpretability from the survival function perspective, as presented in Appendix \ref{app:multilevel}.

\subsection{Parameter Estimation}\label{sec:param}
Given the marginal patient hazard, the corresponding probability density is expressed as:

\begin{align} \cp{t}{\xb;\theta, \boldsymbol{\beta}, \Phi} = \lambda(t \mid \xb;\theta, \boldsymbol{\beta}, \Phi) \exp\left\{-\int_0^t \lambda(s \mid \xb; \theta, \boldsymbol{\beta}, \Phi), \diff{s}\right\}, \end{align}
and the corresponding log-likelihood over the dataset $\mathcal{D}$ is:
\begin{align} \ell(\Tb_N|\Xb_N, \Delta_N; \theta, \boldsymbol{\beta}, \Phi) = \log \prod_{i=1}^N \lambda(t_i \mid \xb_i; \theta, \boldsymbol{\beta}, \Phi)^{\delta_i} \exp\left\{-\int_0^{t_i} \lambda(s \mid \xb_i; \theta, \boldsymbol{\beta}, \Phi) \diff{s}\right\}, \label{eq:obj} \end{align}

where $\Tb_N = \{t_i\}_{i=1}^N$, $\Xb_N = \{\xb_i\}_{i=1}^N$, and $\Delta_N = \{\delta_i\}_{i=1}^N$. 

We use neural networks to model $f_\theta(\xb)$ and $\lambda(t \mid x_d, \phi_d)$. We ensure positive $\lambda(t \mid x_d, \phi_d)$ by applying the $\mathrm{softplus}(.)$ function.
A computationally efficient and unbiased stochastic approximation to Equation~\eqref{eq:obj}, using $L$ data samples and $M$ importance samples over time-to-event outcomes, has been proposed by~\cite{ketenci2023maximum}:

\begin{align}
\begin{split}\label{eq:mcloss}
    &\tilde{\ell} (\Tb_L|\Xb_L, \Delta_L, \tilde{\Tb}_{LM}; \theta, \boldsymbol{\beta}, \Phi) = \frac{N}{L} \sum_{i=1}^L \left(\delta_i \log \lambda(t_i \mid \xb_i ;  \theta, \boldsymbol{\beta}, \Phi)  - \frac{t_i}{M} \sum_{j=1}^M \lambda(\tilde{t}_{ij} \mid \xb_i ;  \theta, \boldsymbol{\beta}, \Phi)  \right)\nonumber,
\end{split}
\end{align}

where  $\tilde{\Tb}_{LM} = \{\{\tilde{t}_{ij} \}_{j=1}^M\}_{i=1}^L$, and $\tilde{t}_{ij} 
 \sim U(0, t_i)$. Unlike former numerical integration-based estimations, such as quadrature, this Monte Carlo (MC) estimate is unbiased~\citep{gregoire1995sampling}. 

\paragraph{Overcoming concurvity through decoupled training.}
Empirically, we observe that jointly optimizing $\{\theta, \boldsymbol{\beta}, \Phi\}$ can hinder interpretability due to two key factors: (1) concurvity, where correlated covariates distort each other’s contributions, leading to unreliable and unstable population-level hazard functions; and (2) ill-conditioning, where the flexibility of the subgroup assignment network allows individual-level hazard functions, $\lambda(t \mid \xb; \theta, \boldsymbol{\beta}, \phi_d)$, to behave as arbitrary, unconstrained functions over all covariates—particularly as the number of subgroups $C$ increases. Together, these effects would reduce the model’s interpretability.

Our goal is to ensure that ADHAM remains interpretable and stable regardless of concurvity and choice of $C$. To mitigate this issue, we disentangle the learning of covariate-specific hazard functions from the identification of subgroups as described in Algorithm \ref{alg:lrn}. In particular, we train each covariate-specific population-level hazard function and mixture assignment network separately by maximizing:

\begin{equation}
\tilde{\ell}_d (\Tb_L|\Xb_L, \Delta, \tilde{\Tb}_{LM}; \phi_d) = \frac{N}{L} \sum_{i=1}^L \left(\delta_i \log \lambda(t_i \mid x_{id} ; \phi_d)  - \frac{t_i}{M} \sum_{j=1}^M \lambda(\tilde{t}_{ij} \mid x_{id} ; \phi_d)  \right),
\end{equation}

with respect to $\phi_d$, and $\tilde{\ell} (\Tb_L|\Xb_L, \Delta_L, \tilde{\Tb}_{LM}; \theta, \boldsymbol{\beta}, \Phi)$ with respect to $\{\theta, \beta\}$. Independently maximizing the population-level hazard log-likelihood $\tilde{\ell}_d$ for each $d \in \{1, 2, \cdots, D\}$ helps eliminate covariate correlation during training by ensuring that each population-level hazard function is learned in isolation. Subsequently, optimizing the overall log-likelihood $\tilde{\ell} (\Tb_L|\Xb_L, \Delta_L, \tilde{\Tb}_{LM}; \theta, \boldsymbol{\beta}, \Phi)$ with respect to ${\theta, \boldsymbol{\beta}}$ learns a distribution $\cp{d}{\xb; \theta, \boldsymbol{\beta}}$ that re-weights the fixed hazard curves to explain the data likelihood best. This two-step approach prevents the mixing weights from influencing the population-level hazard functions, avoiding covariate-specific curves from degenerating into arbitrary functions of all covariates. The full training procedure is outlined in Algorithm~\ref{alg:lrn}.


\begin{algorithm}[h]
    \caption{Mini-batch stochastic gradient descent learning of ADHAM parameters. We use overall model log-likelihood as our early stopping criteria.}\label{alg:lrn}
    
\begin{algorithmic}

    \STATE \textbf{def} $\texttt{fit\_adham}(\mathcal{D}, \theta, \boldsymbol{\beta}, \Phi)$:

    \STATE $\quad$ $\theta, \boldsymbol{\beta}, \Phi \leftarrow{}$ Initialize neural network parameters
    
    \STATE $\quad$ \textbf{while} {$\tilde{\ell} (\Tb_L|\Xb_L, \Delta_L, \tilde{\Tb}_{LM}; \theta, \boldsymbol{\beta}, \Phi)$ has not converged} \textbf{do}

    \STATE $\quad$ $\quad$ $\{\Xb_L, \Tb_L, \Delta_L \}\leftarrow{}$ Sample $L$ data from $\mathcal{D}$
    
    \STATE $\quad$ $\quad$ $\tilde{\Tb}_{LM} \leftarrow{}$ Sample $M$ importance samples from $U(0, \Tb)$
    
    $\quad$ $\quad$ \# 1. Fit Hazard Functions

    \STATE $\quad$ $\quad$ \textbf{for} {$d = \{1,2,\cdots, D\}$} \textbf{do}
    
    \STATE  $\quad$ $\quad$ $\quad$ $g_d$ $\leftarrow{}$ $\nabla_{\phi_d }\tilde{\ell}_d (\Tb_L|\Xb_L, \Delta, \tilde{\Tb}_{LM}; \phi_d) $

    \STATE $\quad$ $\quad$ \textbf{end for}

    \STATE $\quad$ $\quad$ $\Phi$ $\leftarrow{}$ Update using gradients $\{g_d\}_{d=1}^D$ 

    $\quad$ $\quad$ \# 2. Fit Mixture Components
    
    \STATE $\quad$ $\quad$ ${g}$ $\leftarrow{}$ $\nabla_{\theta,\beta }\tilde{\ell} (\Tb_L|\Xb_L, \Delta_L, \tilde{\Tb}_{LM}; \theta, \boldsymbol{\beta}, \Phi)$
 
    \STATE $\quad$ $\quad$ $\theta, \beta$ $\leftarrow{}$ Update using gradients ${g}$ 

    \STATE $\quad$ \textbf{end while}

    \STATE $\quad$ {\bfseries Output:}  $\theta, \boldsymbol{\beta}, \Phi$
    
\end{algorithmic}
\end{algorithm}


\paragraph{Regularization of ADHAM.} To (i) encourage sparsity in subgroup assignments and (ii) promote broader exploration of covariate relevance during training, we subtract two regularization terms from the objective function $\tilde{l}$. While sparsity in covariate weights may eventually reflect meaningful signals, regularization ensures that this structure emerges through learning rather than an early optimization bias:

\begin{align}
    &\tilde{\mathcal{R}}(\Xb_L; \theta, \beta)\nonumber\\
    & =  \frac{1}{L(L-1)}\sum_{i=1}^L\sum_{\substack{j=1 \\ j\neq i}}^L\sum_{c=1}^C \cp{c}{\xb_i; \theta} \cp{c}{\xb_{j}; \theta}  +  \frac{1}{L}\sum_{i=1}^L \sum_{d=1}^D\cp{d}{\xb_i; \theta, \beta} \log \cp{d}{\xb_i; \theta, \beta}. 
\end{align}

The first term is known as an orthogonal output regularization~\citep{brock2016neural, bansal2018can}, which promotes diversity in subgroup assignments by encouraging the model to assign different data points to distinct subgroups with high confidence. The second term is known as an entropy-regularization term~\citep{mnih2016asynchronous, tang2023all}, which encourages ADHAM to consider a wider set of covariates during training, \emph{helping to avoid premature convergence} to narrow, locally optimal solutions. We conduct experiments both with and without incorporating regularization terms into the ADHAM objective, to study the effect empirically, as discussed in Section \ref{sec:results}.

\subsection{Model Selection via Subgroup Refinement}\label{sec:refine}
Determining the optimal number of heterogeneous subgroups is often difficult and usually involves training and evaluating several models. ADHAM circumvents this challenge through a post-training refinement strategy that takes advantage of two key properties: (1) each population-level hazard function is tied to a single covariate $x_d$, and (2) its contribution is scaled by a subgroup-specific constant $\beta_{dc}$.

\begin{algorithm}[h]
    \caption{Pseudo-algorithm for model selection given the subgroup groups $\mathcal{C}$, covariate importance matrix $\boldsymbol{\beta}$, and a threshold, $h$. In practice, we use efficient implementations provided by \texttt{fcluster} and \texttt{linkage} modules of SciPY~\citep{virtanen2020scipy}.}\label{alg:refine}
\begin{algorithmic}
\STATE \textbf{def} $\texttt{combine\_clusters}(\mathcal{C}, \boldsymbol{\beta}, h)$:

    \STATE $\quad$  $\mathcal{C}^\star \leftarrow{} \text{Initialize empty set } \{\}$

    \STATE $\quad$ $\rho$ $\leftarrow{}$ Initialize correlation matrix  $\frac{\beta \beta^\top}{\sqrt{\left(\tr{\beta \beta^\top}\right)\left(\tr{\beta \beta^\top}\right)^\top}}$ \# $C \times C$ correlation matrix
    
    \STATE $\quad$ \textbf{while} $\max \rho \geq h$ \textbf{do} \# While there are no correlated subgroups left

    \STATE  $\quad$  $\quad$ \textbf{for} {$c \in \mathcal{C}$} \textbf{do} 

    \STATE $\quad$ $\quad$ $\quad$\textbf{if} $\max \rho_{c} > h$ \textbf{then}
    
    \STATE $\quad$ $\quad$ $\quad$ $\quad$ $c^\star \leftarrow \argmax \rho_{c}$  \# $\rho_c$ is $C \times 1$ row vector

    \STATE $\quad$ $\quad$ $\quad$ $\quad$ $\mathcal{C}^\star \leftarrow$ Combine groups $ \mathcal{C}^\star \cup \{c, c^\star\}$  \# $ \mathcal{C}^\star$ is a set of (2 cardinality) sets
    
    \STATE $\quad$ $\quad$ $\quad$ $\quad$ $\rho_{c c^\star} \leftarrow$ Update entry to $- \infty$ \# So that \textbf{while} does not run forever

    \STATE $\quad$ $\quad$ $\quad$\textbf{end if}
    
    \STATE $\quad$ $\quad$ \textbf{end for}
    
    \STATE $\quad$ \textbf{end while}

    \STATE $\quad$ \textbf{while}  $\bigcap_{c^\star\in \mathcal{C}^\star}c^\star \neq \{\}$ \textbf{do} \# While there are groups to be merged 

    \STATE $\quad$ $\quad$ \textbf{for}  {$\{c^\star_1, c^\star_2\} \in \mathcal{C}^\star \times \mathcal{C}^\star$} \textbf{do} 

    \STATE $\quad$ $\quad$ $\quad$ \textbf{if} $c^\star_1 \cap c^\star_2 \neq \{\}$ \textbf{then} \# If there are common elements then merge
    
    \STATE $\quad$ $\quad$ $\quad$ $\quad$ $\mathcal{C^\star}$ $\leftarrow{}$ $\mathcal{C^\star} \setminus c_1^\star$ \# Subtract $c_1^\star$ 

    \STATE $\quad$ $\quad$ $\quad$ $\quad$ $\mathcal{C^\star}$ $\leftarrow{}$ $\mathcal{C^\star} \setminus c_2^\star$ \# Subtract $c_2^\star$

    \STATE $\quad$ $\quad$ $\quad$ $\quad$ $\mathcal{C^\star}$ $\leftarrow{}$ $\mathcal{C^\star} \cup \{c_1^\star \cup c_2^\star\}$ \# Add their union back

    \STATE $\quad$ $\quad$ $\quad$ \textbf{end if} 
    
    \STATE $\quad$ $\quad$ \textbf{end for} 
    
    \STATE $\quad$ \textbf{end while} 

   \STATE { $\quad$ \bfseries Output:}  $\mathcal{C}^\star$ \# Refined subgroups
\end{algorithmic}
\end{algorithm}

\begin{figure}[!htb]
    \raggedleft
    \begin{minipage}{0.5\textwidth}
        \raggedleft
        \includegraphics[width=\linewidth]{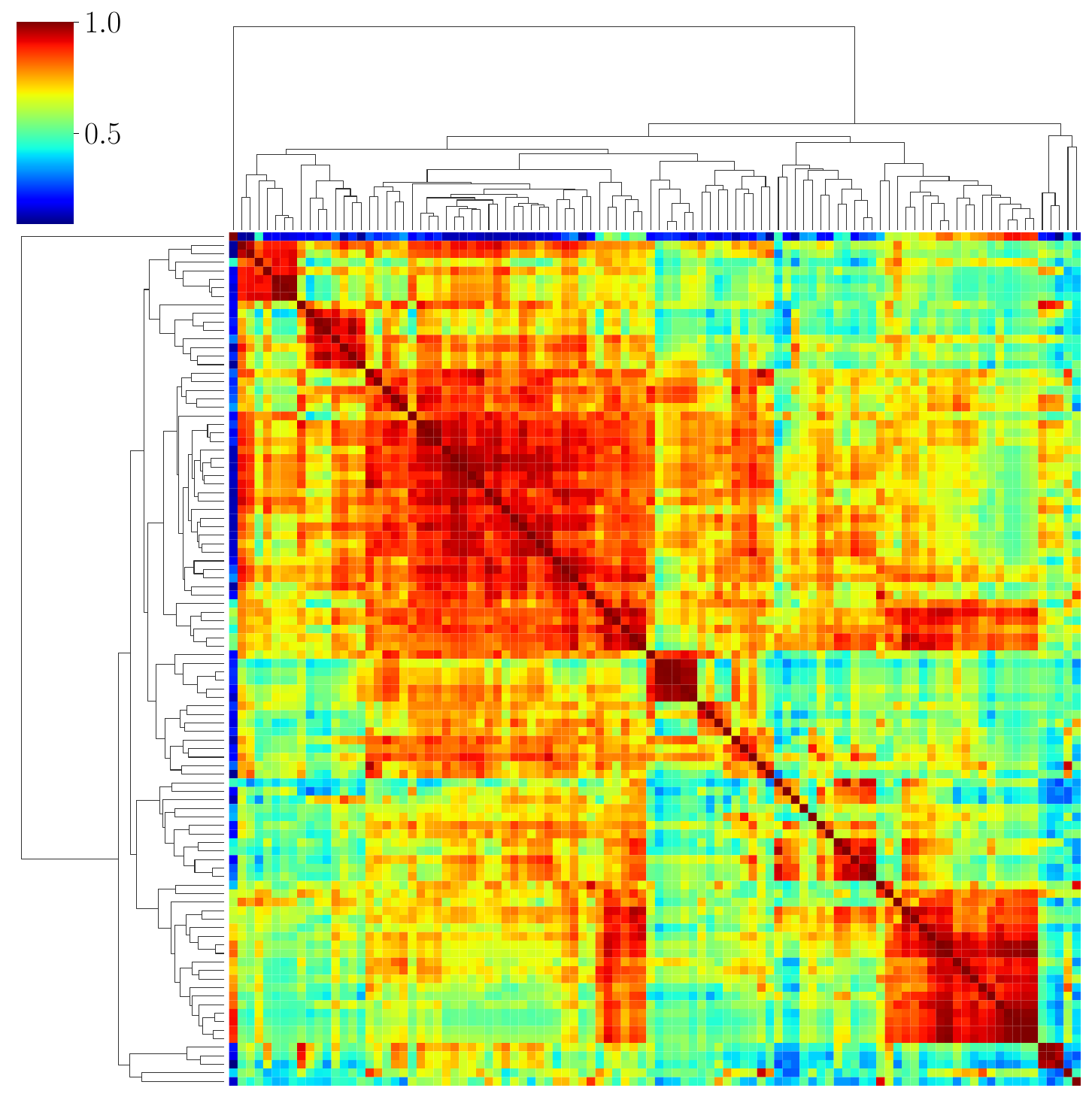}
        \centering
        \label{fig:corr1}
    \end{minipage}\hfil
    \begin{minipage}{0.5\textwidth}
    \raggedright
        \includegraphics[width=\linewidth]{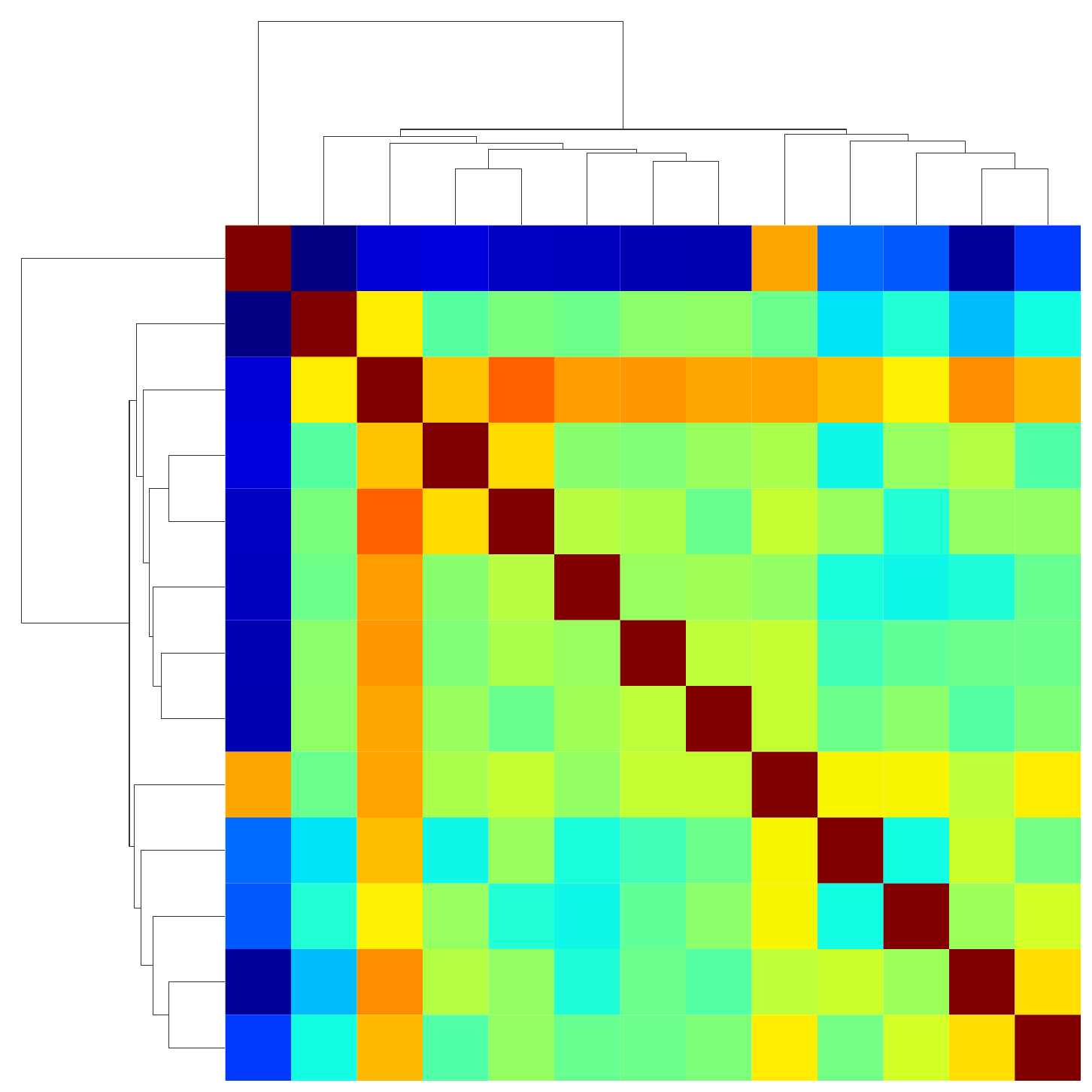}
        \centering
    \end{minipage}
    \caption{ Example correlation matrix $\rho$ before (left) and after (right) subgroup refinement on the \textbf{SUPPORT} dataset. Rows and columns are ordered using \texttt{sns.clustermap} to group similar subgroups. Each cell shows the correlation between two groups (e.g., $\rho_{c_1 c_2}$), with warmer colors indicating higher similarity. Before refinement, many subgroups have overlapping covariate importance patterns, suggesting repetition. After refinement, correlations decrease and subgroup profiles become more distinct, each emphasizing different sets of covariates.}\label{fig:rho}
\end{figure}
As such, if two subgroups $c_1$ and $c_2$ have identical covariate importance vectors, i.e., $\boldsymbol{\beta}_{c_1} = \boldsymbol{\beta}_{c_2}$, they can be combined without affecting the model’s data log-likelihood and overall performance. A formal proof is provided in Appendix~\ref{app:proof}. Note that, this property is not a byproduct of the parameter estimation procedure but the model design. In practice, we evaluate the similarity between subgroups by calculating their pairwise correlation: $\rho_{c_1 c_2} = \frac{\sum_{d=1}^D\beta_{c_1 d} \beta_{c_2 d}}{\sqrt{\left(\sum_{d=1}^D\beta_{c_1d}^2\right) \left(\sum_{d=1}^D\beta_{c_2d}^2\right)}}$. 
subgroups with correlation above a predefined threshold $h$ are then merged using a bottom-up agglomerative strategy~\citep{mullner2011modern}, as described in Algorithm~\ref{alg:refine}. We refer to this procedure as \emph{subgroup refinement}. We illustrate an example correlation matrix $\rho$ in Figure \ref{fig:rho}. In practice, ADHAM can initially be trained with a large number of subgroups $C$, which can later be reduced through this refinement process if needed. We demonstrate an example on \textbf{SUPPORT} dataset in Table \ref{tab:sup}.
\subsection{Predictions}

Making survival predictions with ADHAM requires calculating the following estimate: 

\begin{align}
      S(t | \xb; \theta, \boldsymbol{\beta}, \Phi) = \exp\left\{-\int_0^t \lambda(s \mid \xb; \theta, \boldsymbol{\beta}, \Phi), \diff{s}\right\}\label{eq:pred}
\end{align}

This integral can be calculated by Monte Carlo samples, or numerical integration~\citep{ketenci2023maximum, kvamme2019time}. In our experiments, we use the estimation method proposed by~\cite{ketenci2023maximum}.
\section{Experimental Setup}\label{sec:exp}
\subsection{Datasets} \label{subsection:datasets}
We experiment on two standard benchmark datasets, \textbf{SUPPORT} and \textbf{FLCHAIN}, widely used in survival analysis, and a real-world clinical dataset of patients with chronic kidney disease (\textbf{CKD}), tracking time to acute kidney injury based on electronic health records. Please see Appendix \ref{app:dataset} for a detailed description of the datasets.
\subsection{Baseline Models} \label{subsection:baselines}
We consider ten well-established baselines for survival analysis and state-of-the-art models: CoxPH~\citep{cox1972regression}, DeepSurv~\citep{katzman2018deepsurv}, RSF~\citep{ishwaran2008random}, Cox-Time~\citep{kvamme2019time}, TimeNAM \& TimeNA2M~\citep{peroni2022extending}, DSM~\citep{nagpal2021dsm}, DCM~\citep{nagpal2021dcm}, DeepHit~\citep{lee2018deephit}, and DHA~\citep{ketenci2023maximum}. Please see Appendix \ref{app:baselines} for a description of these models. \subsection{Evaluation Metrics} \label{subsection:metrics}
We follow the same evaluation setup as~\citet{li2023spatio, nagpal2022counterfactual, nagpal2021dsm, nagpal2021dcm, jeanselme2022neural, jeanselme2023neural, wang2022survtrace, lee2019temporal, ketenci2023maximum} and compare the average C-Index, Brier Score, and AUROC statistics over different time horizons to assess each model’s predictive performance. We assess the performance of ADHAM and baseline models using both discrimination and calibration performance metrics across three time horizons, as often used at deployment. The details of evaluation metrics are described in Appendix \ref{app:metric}.
\subsection{Empirical Setup}
We perform 5-fold cross-validation across all models and datasets. At each training step, we leave 20\% of the data out and divide the remaining data into training and validation sets by 70\% and 30\%, respectively. For a fair comparison, we use fixed random seeds. This ensures that the training, validation, and test sets seen by ADHAM and baseline models are identical. We standardize the datasets by subtracting the mean and dividing by the standard deviation of the covariates. We use Adam optimizer for neural models~\citep{kingma2014adam}. Each model is trained for 4000 epochs. The best model is saved based on the validation loss during training (early stopping), and evaluations are done over the held-out test set, which the models do not see during training. Hyperparameter settings are available at Appendix~\ref{app:hyper}.

\begin{table}[t!]
\center
\begin{adjustbox}{width=\textwidth}
\begin{tabular}{cccccccccc}
\hline
& \multicolumn{9}{c}{{SUPPORT} Dataset}\\ \hline
\multicolumn{1}{c|}{} & \multicolumn{3}{c|}{25$^\text{th}$ Quantile} & \multicolumn{3}{c|}{50$^\text{th}$ Quantile} & \multicolumn{3}{c}{75$^\text{th}$ Quantile} \\ \cline{2-10} 
\multicolumn{1}{c|}{\multirow{-2}{*}{Models}}& C-Index $\uparrow$ & BS $\downarrow$  & \multicolumn{1}{c|}{AUROC $\uparrow$} & C-Index $\uparrow$ & BS $\downarrow$  & \multicolumn{1}{c|}{AUROC $\uparrow$}  & C-Index $\uparrow$  & BS $\downarrow$    & AUROC $\uparrow$ \\\hline
\rowcolor{gray!15}
\multicolumn{1}{c|}{{ADHAM} ($\mathcal{R}$, $h=1$, $C=100$)  \;\;\;\;\;\;\;\;}& {{0.660}} & 0.144 & {{0.666}} & {{0.630}} & 0.222 & {0.644} & {{0.620}} & 0.247 & 0.658 \\\rowcolor{gray!15}
\multicolumn{1}{c|}{{ADHAM} ($\mathcal{R}$, $h=0.8$, $C=[24, 40]$)\;\;}& {0.660} &	0.145	& 0.666	&0.629&	0.223&	0.644&	 {{0.620}}&	0.247&	0.656
 \\\rowcolor{gray!15}
\multicolumn{1}{c|}{{ADHAM} ($\mathcal{R}$, $h=0.75$, $C=[17, 30]$)}& {0.660} &	0.145	& {0.666}	&0.629&	0.223&	0.644&	0.619 &	0.247&	0.656 \\\rowcolor{gray!15}
\multicolumn{1}{c|}{{ADHAM} ($\mathcal{R}$, $h=0.7$, $C=[14, 23]$)\;\;}& {0.660} &	0.145	& {0.666}	&0.629&	0.223&	0.644&	0.619 &	0.247&	0.656 \\\rowcolor{gray!15}
\multicolumn{1}{c|}{{ADHAM} ($\mathcal{R}$, $h=0.65$, $C=[11, 19]$)}& 0.659 &	0.145	& 0.665	&0.629&	0.223&	0.644&	0.619 &	0.247&	0.656 \\\rowcolor{gray!15}
\multicolumn{1}{c|}{{ADHAM}}& 0.613 & 0.142 & 0.616 & 0.588 & 0.219 & 0.602 & 0.588 & 0.237 & 0.639 \\
\hline                         
\end{tabular}
\end{adjustbox}
\vspace{0.2cm}
\caption{Performance of ADHAM on the {SUPPORT} dataset as a function of the subgroup refinement threshold $h$. Larger values of $h$ impose stricter criteria for subgroup merging—that is, merging occurs only when the corresponding $\beta_c$ parameters are identical. We observe that model performance remains largely stable across varying values of $h$, indicating robustness to the choice of refinement threshold.}
\label{tab:suph}
\end{table}

\begin{table}[t!]
\center
\begin{adjustbox}{width=\textwidth}
\begin{tabular}{cccccccccc}
\hline
& \multicolumn{9}{c}{\textbf{SUPPORT} Dataset}\\ \hline
\multicolumn{1}{c|}{} & \multicolumn{3}{c|}{25$^\text{th}$ Quantile} & \multicolumn{3}{c|}{50$^\text{th}$ Quantile} & \multicolumn{3}{c}{75$^\text{th}$ Quantile} \\ \cline{2-10} 
\multicolumn{1}{c|}{\multirow{-2}{*}{Models}}& C-Index $\uparrow$ & BS $\downarrow$  & \multicolumn{1}{c|}{AUROC $\uparrow$} & C-Index $\uparrow$ & BS $\downarrow$  & \multicolumn{1}{c|}{AUROC $\uparrow$}  & C-Index $\uparrow$  & BS $\downarrow$    & AUROC $\uparrow$ \\\hline
\multicolumn{1}{c|}{DeepSurv}& 0.605 & \textbf{0.143} & 0.608 & 0.598 & \textbf{0.217} & 0.618 & 0.609 & \textbf{0.232} & \textbf{0.659} \\
\multicolumn{1}{c|}{RSF}& \textbf{0.662} & \textbf{0.139} & \textbf{0.668} & \textbf{0.624} & \textbf{0.214} & \textbf{0.641} & \textbf{0.614} & \textbf{0.230}$^\star$ & \textbf{0.668}$^\star$ \\
\multicolumn{1}{c|}{DeepHit}& \textbf{0.652} & \textbf{0.141} & \textbf{0.659} & 0.613 & 0.220 & \textbf{0.633} & 0.605 & \textbf{0.236} & \textbf{0.661} \\
\multicolumn{1}{c|}{Cox-Time}& 0.625 & \textbf{0.141} & 0.631 & \textbf{0.614} & \textbf{0.214} & \textbf{0.635} & \textbf{0.616} & \textbf{0.230}$^\star$ & \textbf{0.663} \\
\multicolumn{1}{c|}{DSM}& 0.633 & \textbf{0.141} & 0.639 & \textbf{0.621} & \textbf{0.215} & \textbf{0.638} & 0.605 & 0.254 & 0.653 \\
\multicolumn{1}{c|}{DCM}& \textbf{0.657} & \textbf{0.138}$^\star$ & \textbf{0.663} & \textbf{0.620} & \textbf{0.213} & \textbf{0.638} & 0.603 & \textbf{0.234} & 0.652 \\
\multicolumn{1}{c|}{DHA}& \textbf{0.663}$^\star$ & \textbf{0.138}$^\star$ & \textbf{0.672}$^\star$ & \textbf{0.631}$^\star$ & \textbf{0.211}$^\star$ & \textbf{0.650}$^\star$ & \textbf{0.615} & \textbf{0.231} & \textbf{0.663} \\ \hline\rowcolor{gray!15}
\multicolumn{1}{c|}{CoxPH}& 0.553 & 0.262 & 0.558 & 0.567 & 0.222 & 0.588 & 0.590 & 0.351 & 0.649 \\\rowcolor{gray!15}
\multicolumn{1}{c|}{TIMENAM}& 0.621 & \textbf{0.142} & 0.627 & 0.612 & \textbf{\underline{0.215}} & 0.633 & \textbf{0.615} & \textbf{\underline{0.230}}$^\star$ & \textbf{\underline{0.666}} \\\rowcolor{gray!15}
\multicolumn{1}{c|}{TIMENA2M} & \textbf{0.643} & \textbf{\underline{0.139}} & 0.650 & \textbf{0.618} & \textbf{0.216} & \textbf{0.636} & \textbf{0.610} & 0.238 & 0.654 \\\rowcolor{gray!15}
\multicolumn{1}{c|}{\textbf{ADHAM} ($\mathcal{R}$)}& \textbf{\underline{0.660}} & \textbf{0.144} & \textbf{\underline{0.666}} & \textbf{\underline{0.630}} & 0.222 & \textbf{\underline{0.644}} & \textbf{\underline{0.620}}$^\star$ & 0.247 & \textbf{0.658} \\\rowcolor{gray!15}
\multicolumn{1}{c|}{\textbf{ADHAM}}& 0.613 & \textbf{0.142} & 0.616 & 0.588 & 0.219 & 0.602 & 0.588 & 0.237 & 0.639 \\
\hline                         
\end{tabular}
\end{adjustbox}
\vspace{0.2cm}
\caption{Results on the \textbf{SUPPORT} dataset. The models in gray region are interpretable. Best values are denoted by $^\star$, and results that are statistically close to the best values are shown in {\textbf{bold}}. Best performing interpretable values are \underline{underlined}. The standard error of the sample mean (SEM) values are provided in Table \ref{tab:superr}.}
\label{tab:sup}
\end{table}

\section{Results}\label{sec:results}

\paragraph{Interpretability of ADHAM.} We illustrate ADHAM’s \emph{population-level interpretability} on the \textbf{SUPPORT} dataset in Figure~\ref{fig:motivation}, showing that it reliably captures meaningful trends for key measurements. These patterns are a useful tool to understand and debug models~\citep{caruana2015intelligible}. 
\begin{table}[H]
\center
\begin{adjustbox}{width=\textwidth}
\begin{tabular}{cccccccccc}
\hline
& \multicolumn{9}{c}{\textbf{CKD} Dataset}\\ \hline
\multicolumn{1}{c|}{} & \multicolumn{3}{c|}{25$^\text{th}$ Quantile} & \multicolumn{3}{c|}{50$^\text{th}$ Quantile} & \multicolumn{3}{c}{75$^\text{th}$ Quantile} \\ \cline{2-10} 
\multicolumn{1}{c|}{\multirow{-2}{*}{Models}}& C-Index $\uparrow$ & BS $\downarrow$  & \multicolumn{1}{c|}{AUROC $\uparrow$} & C-Index $\uparrow$ & BS $\downarrow$  & \multicolumn{1}{c|}{AUROC $\uparrow$}  & C-Index $\uparrow$  & BS $\downarrow$    & AUROC $\uparrow$ \\
\hline
\multicolumn{1}{c|}{DeepSurv}& \textbf{0.620} & \textbf{0.089} & 0.635 & 0.611 & \textbf{0.164} & 0.653 & \textbf{0.608} & \textbf{0.217} & \textbf{0.684} \\
\multicolumn{1}{c|}{RSF}& \textbf{0.628} & \textbf{{0.088}}$^\star$ & \textbf{0.647} & \textbf{0.621} & \textbf{0.164} & \textbf{0.665} & \textbf{0.611} & \textbf{{0.216}}$^\star$ & \textbf{0.701} \\
\multicolumn{1}{c|}{DeepHit}& \textbf{{0.642}}$^\star$ & \textbf{{0.088}}$^\star$ & \textbf{{0.658}}$^\star$ & \textbf{{0.630}}$^\star$ & \textbf{{0.161}}$^\star$ & \textbf{{0.673}}$^\star$ & \textbf{{0.617}}$^\star$ & \textbf{{0.216}}$^\star$ & \textbf{0.698} \\
\multicolumn{1}{c|}{Cox-Time}& \textbf{0.615} & \textbf{0.090} & 0.626 & 0.611 & \textbf{0.165} & 0.651 & \textbf{0.609} & \textbf{0.219} & \textbf{0.684} \\
\multicolumn{1}{c|}{DSM}& \textbf{0.630} & \textbf{0.091} & \textbf{0.644} & \textbf{0.619} & 0.182 & \textbf{0.669} & \textbf{0.607} & 0.244 & \textbf{{0.702}}$^\star$ \\
\multicolumn{1}{c|}{DHA}& \textbf{0.623} & \textbf{0.089} & \textbf{0.636} & \textbf{0.609} & \textbf{0.164} & \textbf{0.651} & \textbf{0.606} & \textbf{0.217} & \textbf{0.690} \\\hline \rowcolor{gray!15}
\multicolumn{1}{c|}{CoxPH}& 0.593 & \textbf{0.089} & 0.608 & 0.582 & \textbf{0.168} & 0.615 & 0.579 & \textbf{0.224} & 0.658 \\\rowcolor{gray!15}
\multicolumn{1}{c|}{TIMENAM}& \textbf{0.622} & \textbf{\underline{0.089}} & \textbf{\underline{0.639}} & \textbf{0.610} & \textbf{0.168} & \textbf{0.651} & \textbf{0.597} &  \textbf{\underline{0.217}} &  \textbf{\underline{0.688}} \\\rowcolor{gray!15}
\multicolumn{1}{c|}{TIMENA2M}& \textbf{\underline{0.625}} & 0.099 & \textbf{\underline{0.639}} & {0.612} & 0.192 & 0.646 & 0.591 & 0.261 & 0.647 \\\rowcolor{gray!15}
\multicolumn{1}{c|}{\textbf{ADHAM} ($\mathcal{R}$)}& \textbf{0.620} & \textbf{0.090} & \textbf{0.632} & \textbf{\underline{0.618}} & \textbf{0.170} &  \textbf{\underline{0.652}} &  \textbf{\underline{0.607}} & 0.236 & \textbf{0.682}\\\rowcolor{gray!15}
\multicolumn{1}{c|}{\textbf{ADHAM}}& 0.603 & \textbf{0.089} & 0.615 & 0.597 & \textbf{\underline{0.167}} & 0.630 & 0.592 & \textbf{0.226} & 0.657 \\
\hline               
\end{tabular}
\end{adjustbox}
\vspace{0.2cm}
\caption{Results on the \textbf{CKD} dataset. The models in the gray region are interpretable. Best values are denoted by $^\star$, and results that are statistically close to the best values are shown in {\textbf{bold}}. Best performing interpretable values are \underline{underlined}. The standard error of the sample mean (SEM) values are provided in Table \ref{tab:ckderr}.}
\label{tab:ckd}
\end{table}

Figure~\ref{fig:clusters} illustrates ADHAM’s \emph{subgroup-level interpretability}, by uncovering heterogeneous patient subgroups along with their associated covariate importance profiles. We observe that ADHAM groups patients based on abnormal measurement patterns, which are captured through the subgroup-specific weight matrix $\boldsymbol{\beta}$. In Figure~\ref{fig:hazard}, we present patient-specific hazard functions, where we once again observe that ADHAM assigns importance to most critical covariates. 


\paragraph{Performance of ADHAM.} We present performance results in Tables \ref{tab:sup}, \ref{tab:ckd}, and \ref{tab:flc}. The best average results are denoted by $^\star$, values that are statistically close to the best results (with $p=0.05$) with respect to two-sided Welch's t-test are denoted with \textbf{bold}\footnote{Note that the difference in two results may be statistically insignificant due to (1) the mean over fold and (2) the (corrected) sample standard deviation of runs.}, and best interpretable methods are \underline{underlined}. DHA achieves the best performance across most evaluation metrics, though it functions as a black-box model. Among the interpretable approaches—CoxPH, TIMENAM, TIMENA2M, and ADHAM—ADHAM with regularization ranks highest in 9 metrics, matching TIMENA2M, followed by TIMENAM with 8 and CoxPH with 2, across various datasets. We also find that regularization benefits ADHAM, particularly in ranking-based metrics, albeit with a minor trade-off in calibration accuracy. Furthermore, in Table \ref{tab:suph}---on the \textbf{SUPPORT} dataset, we observe that ADHAM's model selection strategy, where we alter $h=1$ to $h=0.65$, results in a minor performance loss while substantially reducing the number of subgroups (\eg from $100$ to $11$). Overall, ADHAM demonstrates similar performance to the existing state-of-the-art interpretable survival models. To evaluate statistical significance, the standard errors are provided in Appendix \ref{app:err}.

\begin{table}[t!]
\center
\begin{adjustbox}{width=\textwidth}
\begin{tabular}{cccccccccc}
\hline
& \multicolumn{9}{c}{\textbf{FLCHAIN} Dataset}\\ \hline
\multicolumn{1}{c|}{} & \multicolumn{3}{c|}{25$^\text{th}$ Quantile} & \multicolumn{3}{c|}{50$^\text{th}$ Quantile} & \multicolumn{3}{c}{75$^\text{th}$ Quantile} \\ \cline{2-10} 
\multicolumn{1}{c|}{\multirow{-2}{*}{Models}}& C-Index $\uparrow$ & BS $\downarrow$  & \multicolumn{1}{c|}{AUROC $\uparrow$} & C-Index $\uparrow$ & BS $\downarrow$  & \multicolumn{1}{c|}{AUROC $\uparrow$}  & C-Index $\uparrow$  & BS $\downarrow$    & AUROC $\uparrow$ \\
\hline
\multicolumn{1}{c|}{DeepSurv}& \textbf{0.788} & \textbf{0.060} & \textbf{0.799} & \textbf{0.792} & \textbf{0.100} & \textbf{0.815} & \textbf{0.788} & \textbf{0.126} & \textbf{0.823} \\
\multicolumn{1}{c|}{RSF}& \textbf{0.801} & \textbf{0.059} & \textbf{0.812} & \textbf{0.798} & \textbf{0.099} & \textbf{0.821} & \textbf{0.793} & \textbf{0.124} & \textbf{0.828} \\
\multicolumn{1}{c|}{DeepHit}& \textbf{0.794} & \textbf{0.061} & \textbf{0.805} & \textbf{0.795} & \textbf{0.101} & \textbf{0.818} & \textbf{0.792} & \textbf{0.127} & \textbf{0.827} \\
\multicolumn{1}{c|}{Cox-Time}& \textbf{0.798} & \textbf{0.065} & \textbf{0.810} & \textbf{0.797} & 0.118 & \textbf{0.820} & \textbf{0.793} & 0.160 & \textbf{0.828} \\
\multicolumn{1}{c|}{DSM}& 0.760 & 0.065 & 0.770 & 0.766 & 0.118 & 0.786 & 0.768 & 0.159 & 0.800 \\
\multicolumn{1}{c|}{DCM}& \textbf{0.790} & \textbf{0.059} & \textbf{0.801} & \textbf{0.788} & \textbf{0.100} & 0.810 & \textbf{0.782} & \textbf{0.128} & 0.814 \\
\multicolumn{1}{c|}{DHA}& \textbf{0.801} & \textbf{{0.058}}$^\star$ & \textbf{0.813} & \textbf{{0.800}}$^\star$ & \textbf{{0.097}}$^\star$ & \textbf{{0.824}}$^\star$ & \textbf{{0.795}}$^\star$ & \textbf{{0.122}}$^\star$ & \textbf{{0.830}}$^\star$ \\ \hline\rowcolor{gray!15}
\multicolumn{1}{c|}{CoxPH}& \textbf{0.789} & 0.103 & \textbf{0.801} & \textbf{0.794} & \underline{0.098} & \textbf{0.817} & \underline{\textbf{0.791}} & 0.168 & \textbf{0.826} \\\rowcolor{gray!15}
\multicolumn{1}{c|}{TIMENAM}& \textbf{0.796} & \textbf{\underline{0.060}} & \textbf{0.807} & \textbf{0.796} & \textbf{0.104} & \textbf{0.818} & \underline{\textbf{0.791}} & \underline{\textbf{0.134}} &\textbf{0.826} \\\rowcolor{gray!15}
\multicolumn{1}{c|}{TIMENA2M}& \textbf{\underline{0.802}}$^\star$ & 0.065 & \textbf{\underline{0.814}}$^\star$ & \textbf{\underline{0.797}} & 0.118 & \underline{\textbf{0.819}} & \underline{\textbf{0.791}} & 0.160 & \underline{\textbf{0.828}} \\\rowcolor{gray!15}
\multicolumn{1}{c|}{\textbf{ADHAM} ($\mathcal{R}$)}& \textbf{0.798} & \textbf{0.064} & \textbf{0.809} & \textbf{0.795} & 0.114 & \textbf{0.818} & \underline{\textbf{0.791}} & 0.154 & \textbf{0.825} \\\rowcolor{gray!15}
\multicolumn{1}{c|}{\textbf{ADHAM}}& 0.776 & \textbf{\underline{0.060}} & 0.786 & 0.782 & \textbf{0.102} & 0.804 & 0.779 & 0.135 & 0.810 \\
\hline            
\end{tabular}
\end{adjustbox}
\vspace{0.2cm}
\caption{Results on the \textbf{FLCHAIN} dataset. The models in the gray region are interpretable. Best values are denoted by $^\star$, and results that are statistically close to the best values are shown in {\textbf{bold}}. Best performing interpretable values are \underline{underlined}.The standard error of the sample mean (SEM) values are provided in Table \ref{tab:flcerr}.}
\label{tab:flc}
\end{table}
\section{Discussion and Limitations}\label{sec:lim}

ADHAM offers a unified framework for interpretability, delivering individualized, subgroup-level, and population-wide risk explanations. Our results show that ADHAM performs on par with leading interpretable survival models. Nonetheless, several limitations remain.

\paragraph{Causal interpretation.}
ADHAM is a predictive, not a causal model. While the proposed tool provides insights into the learned relation between covariates and survival outcomes, practitioners should not interpret these observational correlations as causation. 

 \paragraph{Sensitivity to dataset biases.}
 As with any data-driven model, ADHAM reflects the characteristics and potential biases present in the training data. These biases may affect both predictions and interpretations.

\paragraph{Regularization trade-offs.}
While regularization improves ADHAM’s ranking performance, particularly in identifying high-risk patients, our results show a slight reduction in calibration accuracy. Future work could explore ways to balance this trade-off more effectively.

\paragraph{Applicability to other data types.}
ADHAM is tailored for structured, tabular, and time-series data. Adapting it to handle other modalities, such as imaging or text, is a promising area for future exploration.

\paragraph{Competing risks.} The current formulation of ADHAM is limited to single-risk scenarios, and adapting it to competing risk frameworks is an open problem with important ramifications~\citep{jeanselme2025}.

\section{Conclusion}
In this paper, we introduce ADHAM, a novel survival analysis model that integrates deep additive hazard functions with a mixture-based structure to provide interpretable predictions at the population, subgroup, and individual levels. By decoupling hazard and mixture learning, ADHAM mitigates common interpretability challenges such as concurvity and provides patient-specific risk explanations. Furthermore, we propose a post-training refinement mechanism for selecting the number of subgroups \emph{a posteriori}. ADHAM achieves competitive predictive performance while offering practical interpretability of covariates that support clinical understanding and decision-making, positioning it as a useful tool for real-world applications in healthcare.

 \acks{Mert Ketenci acknowledges this research is supported
by NHLBI award R01HL148248.}

\bibliography{sample}

\begin{thebibliography}{71}
\providecommand{\natexlab}[1]{#1}
\providecommand{\url}[1]{\texttt{#1}}
\expandafter\ifx\csname urlstyle\endcsname\relax
  \providecommand{\doi}[1]{doi: #1}\else
  \providecommand{\doi}{doi: \begingroup \urlstyle{rm}\Url}\fi

\bibitem[Aalen(1978)]{aalen1978nonparametric}
Odd Aalen.
\newblock Nonparametric inference for a family of counting processes.
\newblock \emph{The Annals of Statistics}, pages 701--726, 1978.

\bibitem[Abdullah et~al.(2021)Abdullah, Zahid, and Ali]{abdullah2021review}
Talal~AA Abdullah, Mohd Soperi~Mohd Zahid, and Waleed Ali.
\newblock A review of interpretable ml in healthcare: taxonomy, applications, challenges, and future directions.
\newblock \emph{Symmetry}, 13\penalty0 (12):\penalty0 2439, 2021.

\bibitem[Agarwal et~al.(2021)Agarwal, Melnick, Frosst, Zhang, Lengerich, Caruana, and Hinton]{agarwal2021neural}
Rishabh Agarwal, Levi Melnick, Nicholas Frosst, Xuezhou Zhang, Ben Lengerich, Rich Caruana, and Geoffrey~E Hinton.
\newblock Neural additive models: Interpretable machine learning with neural nets.
\newblock \emph{Advances in neural information processing systems}, 34:\penalty0 4699--4711, 2021.

\bibitem[Ahmad et~al.(2018)Ahmad, Eckert, and Teredesai]{ahmad2018interpretable}
Muhammad~Aurangzeb Ahmad, Carly Eckert, and Ankur Teredesai.
\newblock Interpretable machine learning in healthcare.
\newblock In \emph{Proceedings of the 2018 ACM international conference on bioinformatics, computational biology, and health informatics}, pages 559--560, 2018.

\bibitem[Amann et~al.(2020)Amann, Blasimme, Vayena, Frey, Madai, and Consortium]{amann2020explainability}
Julia Amann, Alessandro Blasimme, Effy Vayena, Dietmar Frey, Vince~I Madai, and Precise4Q Consortium.
\newblock Explainability for artificial intelligence in healthcare: a multidisciplinary perspective.
\newblock \emph{BMC medical informatics and decision making}, 20:\penalty0 1--9, 2020.

\bibitem[Bansal et~al.(2018)Bansal, Chen, and Wang]{bansal2018can}
Nitin Bansal, Xiaohan Chen, and Zhangyang Wang.
\newblock Can we gain more from orthogonality regularizations in training deep networks?
\newblock \emph{Advances in Neural Information Processing Systems}, 31, 2018.

\bibitem[Bhavnani et~al.(2022)Bhavnani, Zhang, Visweswaran, Raji, and Kuo]{bhavnani2022framework}
Suresh~K Bhavnani, Weibin Zhang, Shyam Visweswaran, Mukaila Raji, and Yong-Fang Kuo.
\newblock A framework for modeling and interpreting patient subgroups applied to hospital readmission: visual analytical approach.
\newblock \emph{JMIR Medical Informatics}, 10\penalty0 (12):\penalty0 e37239, 2022.

\bibitem[Bishop and Nasrabadi(2006)]{bishop2006pattern}
Christopher~M Bishop and Nasser~M Nasrabadi.
\newblock \emph{Pattern recognition and machine learning}, volume~4.
\newblock Springer, 2006.

\bibitem[Brier et~al.(1950)]{brier1950verification}
Glenn~W Brier et~al.
\newblock Verification of forecasts expressed in terms of probability.
\newblock \emph{Monthly weather review}, 78\penalty0 (1):\penalty0 1--3, 1950.

\bibitem[Brock et~al.(2016)Brock, Lim, Ritchie, and Weston]{brock2016neural}
Andrew Brock, Theodore Lim, James~M Ritchie, and Nick Weston.
\newblock Neural photo editing with introspective adversarial networks.
\newblock \emph{arXiv preprint arXiv:1609.07093}, 2016.

\bibitem[Caruana et~al.(2015)Caruana, Lou, Gehrke, Koch, Sturm, and Elhadad]{caruana2015intelligible}
Rich Caruana, Yin Lou, Johannes Gehrke, Paul Koch, Marc Sturm, and Noemie Elhadad.
\newblock Intelligible models for healthcare: Predicting pneumonia risk and hospital 30-day readmission.
\newblock In \emph{Proceedings of the 21th ACM SIGKDD international conference on knowledge discovery and data mining}, pages 1721--1730, 2015.

\bibitem[Clark et~al.(2003)Clark, Bradburn, Love, and Altman]{clark2003survival}
Taane~G Clark, Michael~J Bradburn, Sharon~B Love, and Douglas~G Altman.
\newblock Survival analysis part i: basic concepts and first analyses.
\newblock \emph{British journal of cancer}, 89\penalty0 (2):\penalty0 232--238, 2003.

\bibitem[Cole et~al.(2001)Cole, Gelber, Gelber, Coates, and Goldhirsch]{cole2001polychemotherapy}
Bernard~F Cole, Richard~D Gelber, Shari Gelber, Alan~S Coates, and Aron Goldhirsch.
\newblock Polychemotherapy for early breast cancer: an overview of the randomised clinical trials with quality-adjusted survival analysis.
\newblock \emph{The Lancet}, 358\penalty0 (9278):\penalty0 277--286, 2001.

\bibitem[Cox(1972)]{cox1972regression}
David~R Cox.
\newblock Regression models and life-tables.
\newblock \emph{Journal of the Royal Statistical Society: Series B (Methodological)}, 34\penalty0 (2):\penalty0 187--202, 1972.

\bibitem[Dispenzieri et~al.(2012)Dispenzieri, Katzmann, Kyle, Larson, Therneau, Colby, Clark, Mead, Kumar, Melton~III, et~al.]{dispenzieri2012use}
Angela Dispenzieri, Jerry~A Katzmann, Robert~A Kyle, Dirk~R Larson, Terry~M Therneau, Colin~L Colby, Raynell~J Clark, Graham~P Mead, Shaji Kumar, L~Joseph Melton~III, et~al.
\newblock Use of nonclonal serum immunoglobulin free light chains to predict overall survival in the general population.
\newblock In \emph{Mayo Clinic Proceedings}, volume~87, pages 517--523. Elsevier, 2012.

\bibitem[Faucett et~al.(2002)Faucett, Schenker, and Taylor]{faucett2002survival}
Cheryl~L Faucett, Nathaniel Schenker, and Jeremy~MG Taylor.
\newblock Survival analysis using auxiliary variables via multiple imputation, with application to aids clinical trial data.
\newblock \emph{Biometrics}, 58\penalty0 (1):\penalty0 37--47, 2002.

\bibitem[Fleming and Lin(2000)]{fleming2000survival}
Thomas~R Fleming and DY~Lin.
\newblock Survival analysis in clinical trials: past developments and future directions.
\newblock \emph{Biometrics}, 56\penalty0 (4):\penalty0 971--983, 2000.

\bibitem[Graf et~al.(1999)Graf, Schmoor, Sauerbrei, and Schumacher]{graf1999assessment}
Erika Graf, Claudia Schmoor, Willi Sauerbrei, and Martin Schumacher.
\newblock Assessment and comparison of prognostic classification schemes for survival data.
\newblock \emph{Statistics in medicine}, 18\penalty0 (17-18):\penalty0 2529--2545, 1999.

\bibitem[Grambsch and Therneau(1994)]{grambsch1994proportional}
Patricia~M Grambsch and Terry~M Therneau.
\newblock Proportional hazards tests and diagnostics based on weighted residuals.
\newblock \emph{Biometrika}, 81\penalty0 (3):\penalty0 515--526, 1994.

\bibitem[Gregoire and Valentine(1995)]{gregoire1995sampling}
Timothy~G Gregoire and Harry~T Valentine.
\newblock A sampling strategy to estimate the area and perimeter of irregularly shaped planar regions.
\newblock \emph{Forest science}, 41\penalty0 (3):\penalty0 470--476, 1995.

\bibitem[Hagar et~al.(2014)Hagar, Albers, Pivovarov, Chase, Dukic, and Elhadad]{hagar2014survival}
Yolanda Hagar, David Albers, Rimma Pivovarov, Herbert Chase, Vanja Dukic, and No{\'e}mie Elhadad.
\newblock Survival analysis with electronic health record data: Experiments with chronic kidney disease.
\newblock \emph{Statistical Analysis and Data Mining: The ASA Data Science Journal}, 7\penalty0 (5):\penalty0 385--403, 2014.

\bibitem[Haider et~al.(2020)Haider, Hoehn, Davis, and Greiner]{haider2020effective}
Humza Haider, Bret Hoehn, Sarah Davis, and Russell Greiner.
\newblock Effective ways to build and evaluate individual survival distributions.
\newblock \emph{J. Mach. Learn. Res.}, 21\penalty0 (85):\penalty0 1--63, 2020.

\bibitem[Han et~al.(2022)Han, Goldstein, and Ranganath]{han2022survival}
Xintian Han, Mark Goldstein, and Rajesh Ranganath.
\newblock Survival mixture density networks.
\newblock \emph{arXiv preprint arXiv:2208.10759}, 2022.

\bibitem[Hegselmann et~al.(2020)Hegselmann, Volkert, Ohlenburg, Gottschalk, Dugas, and Ertmer]{hegselmann2020evaluation}
Stefan Hegselmann, Thomas Volkert, Hendrik Ohlenburg, Antje Gottschalk, Martin Dugas, and Christian Ertmer.
\newblock An evaluation of the doctor-interpretability of generalized additive models with interactions.
\newblock In \emph{Machine Learning for Healthcare Conference}, pages 46--79. PMLR, 2020.

\bibitem[Hess(1995)]{hess1995graphical}
Kenneth~R Hess.
\newblock Graphical methods for assessing violations of the proportional hazards assumption in cox regression.
\newblock \emph{Statistics in medicine}, 14\penalty0 (15):\penalty0 1707--1723, 1995.

\bibitem[Ishwaran et~al.(2008)Ishwaran, Kogalur, Blackstone, and Lauer]{ishwaran2008random}
Hemant Ishwaran, Udaya~B Kogalur, Eugene~H Blackstone, and Michael~S Lauer.
\newblock Random survival forests.
\newblock \emph{The annals of applied statistics}, 2\penalty0 (3):\penalty0 841--860, 2008.

\bibitem[Jeanselme et~al.(2022)Jeanselme, Tom, and Barrett]{jeanselme2022neural}
Vincent Jeanselme, Brian Tom, and Jessica Barrett.
\newblock Neural survival clustering: Non-parametric mixture of neural networks for survival clustering.
\newblock In \emph{Conference on Health, Inference, and Learning}, pages 92--102. PMLR, 2022.

\bibitem[Jeanselme et~al.(2023)Jeanselme, Yoon, Tom, and Barrett]{jeanselme2023neural}
Vincent Jeanselme, Chang~Ho Yoon, Brian Tom, and Jessica Barrett.
\newblock Neural fine-gray: Monotonic neural networks for competing risks.
\newblock In \emph{Conference on Health, Inference, and Learning}, pages 379--392. PMLR, 2023.

\bibitem[Jeanselme et~al.(2025)Jeanselme, Tom, and Barrett]{jeanselme2025}
Vincent Jeanselme, Brian Tom, and Jessica Barrett.
\newblock Competing risks: Impact on risk estimation and algorithmic fairness.
\newblock \emph{arXiv preprint arXiv:2508.05435}, 2025.

\bibitem[Jiang(2022)]{jiang2022coxnams}
Zhenjie Jiang.
\newblock Coxnams: Interpretable deep learning model for survival analysis.
\newblock Master's thesis, ETH Zurich, 2022.

\bibitem[Kaplan and Meier(1958)]{kaplan1958nonparametric}
Edward~L Kaplan and Paul Meier.
\newblock Nonparametric estimation from incomplete observations.
\newblock \emph{Journal of the American statistical association}, 53\penalty0 (282):\penalty0 457--481, 1958.

\bibitem[Katzman et~al.(2018)Katzman, Shaham, Cloninger, Bates, Jiang, and Kluger]{katzman2018deepsurv}
Jared~L Katzman, Uri Shaham, Alexander Cloninger, Jonathan Bates, Tingting Jiang, and Yuval Kluger.
\newblock Deepsurv: personalized treatment recommender system using a cox proportional hazards deep neural network.
\newblock \emph{BMC medical research methodology}, 18\penalty0 (1):\penalty0 1--12, 2018.

\bibitem[Ketenci et~al.(2023)Ketenci, Bhave, Elhadad, and Perotte]{ketenci2023maximum}
Mert Ketenci, Shreyas Bhave, Noemie Elhadad, and Adler Perotte.
\newblock Maximum likelihood estimation of flexible survival densities with importance sampling.
\newblock In \emph{Machine Learning for Healthcare Conference}, pages 360--380. PMLR, 2023.

\bibitem[Kingma and Ba(2014)]{kingma2014adam}
Diederik~P Kingma and Jimmy Ba.
\newblock Adam: A method for stochastic optimization.
\newblock \emph{arXiv preprint arXiv:1412.6980}, 2014.

\bibitem[Knaus et~al.(1995)Knaus, Harrell, Lynn, Goldman, Phillips, Connors, Dawson, Fulkerson, Califf, Desbiens, et~al.]{knaus1995support}
William~A Knaus, Frank~E Harrell, Joanne Lynn, Lee Goldman, Russell~S Phillips, Alfred~F Connors, Neal~V Dawson, William~J Fulkerson, Robert~M Califf, Norman Desbiens, et~al.
\newblock The support prognostic model: Objective estimates of survival for seriously ill hospitalized adults.
\newblock \emph{Annals of internal medicine}, 122\penalty0 (3):\penalty0 191--203, 1995.

\bibitem[Kov{\'a}cs(2024)]{kovacs2024feature}
L{\'a}szl{\'o} Kov{\'a}cs.
\newblock Feature selection algorithms in generalized additive models under concurvity.
\newblock \emph{Computational Statistics}, 39\penalty0 (2):\penalty0 461--493, 2024.

\bibitem[Kovalev et~al.(2020)Kovalev, Utkin, and Kasimov]{kovalev2020survlime}
Maxim~S Kovalev, Lev~V Utkin, and Ernest~M Kasimov.
\newblock Survlime: A method for explaining machine learning survival models.
\newblock \emph{Knowledge-Based Systems}, 203:\penalty0 106164, 2020.

\bibitem[Krzyzi{\'n}ski et~al.(2023)Krzyzi{\'n}ski, Spytek, Baniecki, and Biecek]{krzyzinski2023survshap}
Mateusz Krzyzi{\'n}ski, Miko{\l}aj Spytek, Hubert Baniecki, and Przemys{\l}aw Biecek.
\newblock Survshap (t): time-dependent explanations of machine learning survival models.
\newblock \emph{Knowledge-Based Systems}, 262:\penalty0 110234, 2023.

\bibitem[Kvamme(2022)]{havakvpy36:online}
H{\aa}vard Kvamme.
\newblock havakv/pycox: Survival analysis with pytorch.
\newblock \url{https://github.com/havakv/pycox}, 11 2022.

\bibitem[Kvamme et~al.(2019)Kvamme, Borgan, and Scheel]{kvamme2019time}
H{\aa}vard Kvamme, {\O}rnulf Borgan, and Ida Scheel.
\newblock Time-to-event prediction with neural networks and cox regression.
\newblock \emph{Journal of Machine Learning Research}, 2019.

\bibitem[Lee et~al.(2018)Lee, Zame, Yoon, and Van Der~Schaar]{lee2018deephit}
Changhee Lee, William Zame, Jinsung Yoon, and Mihaela Van Der~Schaar.
\newblock Deephit: A deep learning approach to survival analysis with competing risks.
\newblock In \emph{Proceedings of the AAAI conference on artificial intelligence}, volume~32, 2018.

\bibitem[Lee et~al.(2019)Lee, Zame, Alaa, and Schaar]{lee2019temporal}
Changhee Lee, William Zame, Ahmed Alaa, and Mihaela Schaar.
\newblock Temporal quilting for survival analysis.
\newblock In \emph{The 22nd international conference on artificial intelligence and statistics}, pages 596--605. PMLR, 2019.

\bibitem[Li et~al.(2023)Li, Liang, Ma, and Ma]{li2023spatio}
Yang Li, Dongzuo Liang, Shuangge Ma, and Chenjin Ma.
\newblock Spatio-temporally smoothed deep survival neural network.
\newblock \emph{Journal of Biomedical Informatics}, 137:\penalty0 104255, 2023.

\bibitem[Lou et~al.(2013)Lou, Caruana, Gehrke, and Hooker]{lou2013accurate}
Yin Lou, Rich Caruana, Johannes Gehrke, and Giles Hooker.
\newblock {A}ccurate {I}ntelligible {M}odels {w}ith {P}airwise {I}nteractions.
\newblock In \emph{Proceedings of the 19th ACM SIGKDD International Conference on Knowledge Discovery and Data Mining}, pages 623--631, 2013.

\bibitem[Lu et~al.(2023)Lu, Swisher, Chung, Jaffray, and Sidey-Gibbons]{lu2023importance}
Sheng-Chieh Lu, Christine~L Swisher, Caroline Chung, David Jaffray, and Chris Sidey-Gibbons.
\newblock On the importance of interpretable machine learning predictions to inform clinical decision making in oncology.
\newblock \emph{Frontiers in Oncology}, 13:\penalty0 1129380, 2023.

\bibitem[Lundberg and Lee(2017)]{lundberg2017unified}
Scott~M Lundberg and Su-In Lee.
\newblock A unified approach to interpreting model predictions.
\newblock \emph{Advances in neural information processing systems}, 30, 2017.

\bibitem[Mnih et~al.(2016)Mnih, Badia, Mirza, Graves, Lillicrap, Harley, Silver, and Kavukcuoglu]{mnih2016asynchronous}
Volodymyr Mnih, Adria~Puigdomenech Badia, Mehdi Mirza, Alex Graves, Timothy Lillicrap, Tim Harley, David Silver, and Koray Kavukcuoglu.
\newblock Asynchronous methods for deep reinforcement learning.
\newblock In \emph{International conference on machine learning}, pages 1928--1937. PmLR, 2016.

\bibitem[Morita et~al.(2009)Morita, Okamoto, Kobayashi, Yamazaki, Asahina, Inoue, Hagiwara, Sunaga, Yanagitani, Hida, et~al.]{morita2009combined}
Satoshi Morita, Isamu Okamoto, Kunihiko Kobayashi, Koichi Yamazaki, Hajime Asahina, Akira Inoue, Koichi Hagiwara, Noriaki Sunaga, Noriko Yanagitani, Toyoaki Hida, et~al.
\newblock Combined survival analysis of prospective clinical trials of gefitinib for non--small cell lung cancer with egfr mutations.
\newblock \emph{Clinical Cancer Research}, 15\penalty0 (13):\penalty0 4493--4498, 2009.

\bibitem[M{\"u}llner(2011)]{mullner2011modern}
Daniel M{\"u}llner.
\newblock Modern hierarchical, agglomerative clustering algorithms.
\newblock \emph{arXiv preprint arXiv:1109.2378}, 2011.

\bibitem[Nagpal et~al.(2021{\natexlab{a}})Nagpal, Li, and Dubrawski]{nagpal2021dsm}
Chirag Nagpal, Xinyu Li, and Artur Dubrawski.
\newblock Deep survival machines: Fully parametric survival regression and representation learning for censored data with competing risks.
\newblock \emph{IEEE Journal of Biomedical and Health Informatics}, 25\penalty0 (8):\penalty0 3163--3175, 2021{\natexlab{a}}.

\bibitem[Nagpal et~al.(2021{\natexlab{b}})Nagpal, Yadlowsky, Rostamzadeh, and Heller]{nagpal2021dcm}
Chirag Nagpal, Steve Yadlowsky, Negar Rostamzadeh, and Katherine Heller.
\newblock Deep cox mixtures for survival regression.
\newblock In \emph{Machine Learning for Healthcare Conference}, pages 674--708. PMLR, 2021{\natexlab{b}}.

\bibitem[Nagpal et~al.(2022)Nagpal, Goswami, Dufendach, and Dubrawski]{nagpal2022counterfactual}
Chirag Nagpal, Mononito Goswami, Keith Dufendach, and Artur Dubrawski.
\newblock Counterfactual phenotyping with censored time-to-events.
\newblock \emph{Proceedings of the 28th ACM SIGKDD Conference on Knowledge Discovery and Data Mining}, 2022.

\bibitem[Nelson(1969)]{nelson1969hazard}
Wayne Nelson.
\newblock Hazard plotting for incomplete failure data.
\newblock \emph{Journal of Quality Technology}, 1\penalty0 (1):\penalty0 27--52, 1969.

\bibitem[Panahiazar et~al.(2015)Panahiazar, Taslimitehrani, Pereira, and Pathak]{panahiazar2015using}
Maryam Panahiazar, Vahid Taslimitehrani, Naveen Pereira, and Jyotishman Pathak.
\newblock Using ehrs and machine learning for heart failure survival analysis.
\newblock \emph{Studies in health technology and informatics}, 216:\penalty0 40, 2015.

\bibitem[Peroni et~al.(2022)Peroni, Kurban, Yang, Kim, Kang, and Song]{peroni2022extending}
Matthew Peroni, Marharyta Kurban, Sun~Young Yang, Young~Sun Kim, Hae~Yeon Kang, and Ji~Hyun Song.
\newblock Extending the neural additive model for survival analysis with ehr data.
\newblock \emph{arXiv preprint arXiv:2211.07814}, 2022.

\bibitem[Perotte et~al.(2015)Perotte, Ranganath, Hirsch, Blei, and Elhadad]{perotte2015risk}
Adler Perotte, Rajesh Ranganath, Jamie~S Hirsch, David Blei, and No{\'e}mie Elhadad.
\newblock Risk prediction for chronic kidney disease progression using heterogeneous electronic health record data and time series analysis.
\newblock \emph{Journal of the American Medical Informatics Association}, 22\penalty0 (4):\penalty0 872--880, 2015.

\bibitem[Qi et~al.(2023)Qi, Kumar, Farrokh, Sun, Kuan, Ranganath, Henao, and Greiner]{qi2023effective}
Shi-ang Qi, Neeraj Kumar, Mahtab Farrokh, Weijie Sun, Li-Hao Kuan, Rajesh Ranganath, Ricardo Henao, and Russell Greiner.
\newblock An effective meaningful way to evaluate survival models.
\newblock \emph{arXiv preprint arXiv:2306.01196}, 2023.

\bibitem[Ramsay et~al.(2003)Ramsay, Burnett, and Krewski]{ramsay2003effect}
Timothy~O Ramsay, Richard~T Burnett, and Daniel Krewski.
\newblock The effect of concurvity in generalized additive models linking mortality to ambient particulate matter.
\newblock \emph{Epidemiology}, 14\penalty0 (1):\penalty0 18--23, 2003.

\bibitem[Rindt et~al.(2022)Rindt, Hu, Steinsaltz, and Sejdinovic]{rindt2022survival}
David Rindt, Robert Hu, David Steinsaltz, and Dino Sejdinovic.
\newblock Survival regression with proper scoring rules and monotonic neural networks.
\newblock In \emph{International conference on artificial intelligence and statistics}, pages 1190--1205. PMLR, 2022.

\bibitem[Shortliffe and Sep{\'u}lveda(2018)]{shortliffe2018clinical}
Edward~H Shortliffe and Martin~J Sep{\'u}lveda.
\newblock Clinical decision support in the era of artificial intelligence.
\newblock \emph{Jama}, 320\penalty0 (21):\penalty0 2199--2200, 2018.

\bibitem[Siems et~al.(2023)Siems, Ditschuneit, Ripken, Lindborg, Schambach, Otterbach, and Genzel]{siems2023curve}
Julien Siems, Konstantin Ditschuneit, Winfried Ripken, Alma Lindborg, Maximilian Schambach, Johannes Otterbach, and Martin Genzel.
\newblock Curve your enthusiasm: concurvity regularization in differentiable generalized additive models.
\newblock \emph{Advances in Neural Information Processing Systems}, 36:\penalty0 19029--19057, 2023.

\bibitem[Singh and Mukhopadhyay(2011)]{singh2011survival}
Ritesh Singh and Keshab Mukhopadhyay.
\newblock Survival analysis in clinical trials: Basics and must know areas.
\newblock \emph{Perspectives in clinical research}, 2\penalty0 (4):\penalty0 145--148, 2011.

\bibitem[Tan et~al.(2024)Tan, Chen, Zhou, Ong, Sin, Bui, Mehta, Feng, and See]{tan2024association}
Daniel~J Tan, Jiayang Chen, Yirui Zhou, Jaryl Shen~Quan Ong, Richmond Jing~Xuan Sin, Thach~V Bui, Anokhi~Amit Mehta, Mengling Feng, and Kay~Choong See.
\newblock Association of body temperature and mortality in critically ill patients: an observational study using two large databases.
\newblock \emph{European Journal of Medical Research}, 29\penalty0 (1):\penalty0 33, 2024.

\bibitem[Tang et~al.(2023)Tang, Chen, Zhao, Wang, and Tao]{tang2023all}
Liyao Tang, Zhe Chen, Shanshan Zhao, Chaoyue Wang, and Dacheng Tao.
\newblock All points matter: entropy-regularized distribution alignment for weakly-supervised 3d segmentation.
\newblock \emph{Advances in Neural Information Processing Systems}, 36:\penalty0 78657--78673, 2023.

\bibitem[Tonekaboni et~al.(2019)Tonekaboni, Joshi, McCradden, and Goldenberg]{tonekaboni2019clinicians}
Sana Tonekaboni, Shalmali Joshi, Melissa~D McCradden, and Anna Goldenberg.
\newblock What clinicians want: contextualizing explainable machine learning for clinical end use.
\newblock In \emph{Machine learning for healthcare conference}, pages 359--380. PMLR, 2019.

\bibitem[Uno et~al.(2011)Uno, Cai, Pencina, D'Agostino, and Wei]{uno2011c}
Hajime Uno, Tianxi Cai, Michael~J Pencina, Ralph~B D'Agostino, and Lee-Jen Wei.
\newblock On the c-statistics for evaluating overall adequacy of risk prediction procedures with censored survival data.
\newblock \emph{Statistics in medicine}, 30\penalty0 (10):\penalty0 1105--1117, 2011.

\bibitem[Utkin et~al.(2022)Utkin, Satyukov, and Konstantinov]{utkin2022survnam}
Lev~V Utkin, Egor~D Satyukov, and Andrei~V Konstantinov.
\newblock Survnam: The machine learning survival model explanation.
\newblock \emph{Neural Networks}, 147:\penalty0 81--102, 2022.

\bibitem[Vigan{\`o} et~al.(2000)Vigan{\`o}, Dorgan, Buckingham, Bruera, and Suarez-Almazor]{vigano2000survival}
Antonio Vigan{\`o}, Marlene Dorgan, Jeanette Buckingham, Eduardo Bruera, and Maria~E Suarez-Almazor.
\newblock Survival prediction in terminal cancer patients: a systematic review of the medical literature.
\newblock \emph{Palliative Medicine}, 14\penalty0 (5):\penalty0 363--374, 2000.

\bibitem[Virtanen et~al.(2020)Virtanen, Gommers, Oliphant, Haberland, Reddy, Cournapeau, Burovski, Peterson, Weckesser, Bright, et~al.]{virtanen2020scipy}
Pauli Virtanen, Ralf Gommers, Travis~E Oliphant, Matt Haberland, Tyler Reddy, David Cournapeau, Evgeni Burovski, Pearu Peterson, Warren Weckesser, Jonathan Bright, et~al.
\newblock Scipy 1.0: fundamental algorithms for scientific computing in python.
\newblock \emph{Nature methods}, 17\penalty0 (3):\penalty0 261--272, 2020.

\bibitem[Wang and Sun(2022)]{wang2022survtrace}
Zifeng Wang and Jimeng Sun.
\newblock Survtrace: Transformers for survival analysis with competing events.
\newblock In \emph{Proceedings of the 13th ACM International Conference on Bioinformatics, Computational Biology and Health Informatics}, pages 1--9, 2022.

\bibitem[Xu and Guo(2023)]{xu2023coxnam}
Liangchen Xu and Chonghui Guo.
\newblock Coxnam: An interpretable deep survival analysis model.
\newblock \emph{Expert Systems with Applications}, page 120218, 2023.

\end{thebibliography}

\newpage
\appendix
\section{Notation Table}
\label{app:notation}

\begin{table}[H]
\begin{tabular}{ll}
\textbf{Symbol} & \textbf{Meaning}                              \\
$N$             & Dataset size                                  \\
$D$             & Feature dimensionality                        \\
$C$             & Subgroup size                  \\
$\mathcal{D}$   & Empirical dataset                             \\

$\tilde{\Tb}_{LM}$ & Importance time samples to approximate loglikelihood objective\\

$i$             & Data instance index                           \\
$d$             & Feature index                                 \\
$h$             & Correlation threshold to merge subgroups \\
$\xb$            & Observed covariates                           \\
$\delta$         & Censoring index                               \\
$t$             & Recorded time-to-ecent or censoring time      \\
$\phi_d$        & Population-level hazard network parameter     \\
$\theta$        & Subgroup assignment network parameter               \\
$\Phi$          & $\Phi=\{\phi_d\}_{d=1}^D$                    
\end{tabular}\caption{Notation Table}
\end{table}

\section{Flow chart}
\label{app:flow}

 \begin{figure}[H]
\includegraphics[width=\linewidth]{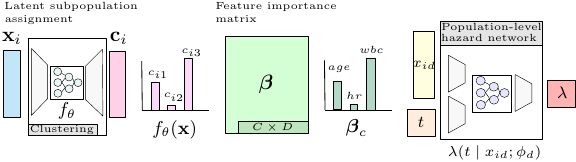}
\caption{Flowchart of ADHAM. The covariates are mapped to latent subgroups, which are used to index the rows of $\beta$. The population-level hazard curves are calculated via $\lambda(t \mid x_{id}; \phi_d)$, and the marginal hazard is calculated by marginalizing the population-level curve using subgroup-level weights, and instance-specific subgroup assignments.
}\label{fig:motivation}
\end{figure}

\section{Model Selection}

\subsection{Merging Same subgroups, with $\boldsymbol{\beta}_{c_1} = \boldsymbol{\beta}_{c_2}$, Retains the Data Generating Log-likelihood and Predictive Performance}\label{app:proof}

In this section, we show that our model selection procedure, outlined in Algorithm \ref{alg:refine}, does not change the data log-likelihood and predictive performance if $\boldsymbol{\beta}_{c_1} = \boldsymbol{\beta}_{c_2}$. 

\begin{prop}
if $\boldsymbol{\beta}_{c_1} = \boldsymbol{\beta}_{c_2}$, for $c_1$ and $c_2 \in \mathcal{C} = \{1,2, \cdots, C\}$, then grouping $c_1$ and $c_2$ into a new group $c^\star$, where $\cp{c^\star}{\xb; \theta} = \cp{c_1}{\xb; \theta}  + \cp{c_2}{\xb; \theta}$ does not change the data generating log-likelihood and risk predictions. 
\end{prop}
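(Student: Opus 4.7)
The plan is to show that merging the two subgroups leaves the marginal hazard function $\lambda(t \mid \xb; \theta, \boldsymbol{\beta}, \Phi)$ pointwise unchanged; since both the data log-likelihood in Equation~\eqref{eq:obj} and the survival prediction in Equation~\eqref{eq:pred} depend on the covariates and parameters only through this marginal hazard, invariance of $\lambda$ immediately implies invariance of likelihood and of risk predictions.

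First I would expand the marginal hazard and split the outer sum over subgroups into the contribution from $\{c_1,c_2\}$ and the contribution from $\mathcal{C}\setminus\{c_1,c_2\}$:
\begin{align*}
\lambda(t \mid \xb; \theta, \boldsymbol{\beta}, \Phi)
&= \sum_{d=1}^D \lambda(t \mid x_d; \phi_d)\Big[\sum_{c\in\mathcal{C}\setminus\{c_1,c_2\}} \beta_{dc}\, f_{\theta_c}(\xb) \\
&\qquad\qquad\qquad\qquad\qquad + \beta_{d c_1}\, f_{\theta_{c_1}}(\xb) + \beta_{d c_2}\, f_{\theta_{c_2}}(\xb)\Big].
\end{align*}
Under the assumption $\boldsymbol{\beta}_{c_1}=\boldsymbol{\beta}_{c_2}$, setting $\beta_{d c^\star}\deq \beta_{d c_1}=\beta_{d c_2}$ for every $d$ and using the merging rule $f_{\theta_{c^\star}}(\xb)=f_{\theta_{c_1}}(\xb)+f_{\theta_{c_2}}(\xb)$, the bracketed expression collapses to $\sum_{c\in\mathcal{C}^\star}\beta_{dc}\,f_{\theta_c}(\xb)$ where $\mathcal{C}^\star=(\mathcal{C}\setminus\{c_1,c_2\})\cup\{c^\star\}$. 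Thus $\lambda$ agrees pointwise under the two parameterisations, for every $(\xb,t)$.

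Next I would verify that the merged assignment is still a valid categorical distribution: since $f_{\theta_{c_1}}(\xb)+f_{\theta_{c_2}}(\xb)$ is a sum of nonnegative terms and $\sum_{c\in\mathcal{C}} f_{\theta_c}(\xb)=1$, summing $f_{\theta_{c^\star}}(\xb)$ together with the remaining $f_{\theta_c}(\xb)$ yields $1$. Likewise $\boldsymbol{\beta}_{c^\star}$ lies on the $D$-simplex because $\boldsymbol{\beta}_{c_1}$ does. Hence the reparameterisation is well-defined and inhabits the same model class.

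Finally, I would conclude by composition: the density $p(t\mid \xb)$, the log-likelihood $\ell$, and the survival function $S(t\mid \xb)$ in Equation~\eqref{eq:pred} are all deterministic functionals of $\lambda(\cdot\mid \xb;\theta,\boldsymbol{\beta},\Phi)$, so their values are preserved by the merge. The argument is entirely algebraic, and there is no real obstacle beyond bookkeeping; the only subtlety worth flagging is the simplex-preservation check above, which guarantees the refinement procedure in Algorithm~\ref{alg:refine} produces a model of the same family rather than a mere numerical coincidence at the level of hazards.
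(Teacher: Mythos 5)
Your proposal is correct and follows essentially the same route as the paper's proof: split the double sum over subgroups into the $\{c_1,c_2\}$ part and the rest, use $\boldsymbol{\beta}_{c_1}=\boldsymbol{\beta}_{c_2}$ to factor out the common covariate-weighted hazard, and absorb $\cp{c_1}{\xb;\theta}+\cp{c_2}{\xb;\theta}$ into $\cp{c^\star}{\xb;\theta}$, concluding that the unchanged marginal hazard forces unchanged likelihood and predictions. Your extra check that the merged parameters still lie on the appropriate simplices is a small, welcome addition the paper leaves implicit.
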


\begin{proof} Consider an input patient $\xb$ and a model in which two groups, $c_1$ and $c_2$, share identical covariate importance vectors, i.e., $\boldsymbol{\beta}_{c_1} = \boldsymbol{\beta}_{c_2}$. To complete the proof, it suffices to show that their combined contribution to the marginal hazard function remains unchanged when merged into a single group $c^\star$, with assignment probability $\cp{c^\star}{\xb; \theta} = \cp{c_1}{\xb; \theta} + \cp{c_2}{\xb; \theta}$:

\begin{align}
    &\sum_{c=1}^C \sum_{d=1}^D  \beta_{dc}\cp{c}{\xb; \theta}\lambda(t \mid x_{id}; \phi_d) \nonumber \\
    &= \sum_{\substack{c\neq c_1 \\ c \neq c_2}} \sum_{d=1}^D  \beta_{dc}\cp{c}{\xb; \theta}\lambda(t \mid x_{id}; \phi_d) + \sum_{c\in \{ c_1, c_2\}} \sum_{d=1}^D  \beta_{dc}\cp{c}{\xb; \theta}\lambda(t \mid x_{id}; \phi_d) \nonumber \\
    &=  \sum_{\substack{c\neq c_1 \\ c \neq c_2}} \sum_{d=1}^D  \beta_{dc}\cp{c}{\xb; \theta}\lambda(t \mid x_{id}; \phi_d) + \left(\cp{c_1}{\xb; \theta}  + \cp{c_2}{\xb; \theta}\right) \sum_{d=1}^D \beta_{dc^\star} \lambda(t \mid x_{id}; \phi_d) \nonumber\\
    &= \sum_{\substack{c\neq c_1 \\ c \neq c_2}} \sum_{d=1}^D  \beta_{dc}\cp{c}{\xb; \theta}\lambda(t \mid x_{id}; \phi_d) + \cp{c^\star}{\xb; \theta} \sum_{d=1}^D \beta_{dc^\star} \lambda(t \mid x_{id}; \phi_d)\nonumber\\
    &= \sum_{c \in  \mathcal{C}^\star} \sum_{d=1}^D \cp{c}{\xb; \theta}\beta_{dc^\star} \lambda(t \mid x_{id}; \phi_d).
\end{align}

The resulting set is updated to $\mathcal{C}^\star = c^\star \cup \mathcal{C} \setminus \{c_1, c_2\}$, where $\mathcal{C} := \{1, 2, \dots, C\}$. Since the marginal hazard function is the same, the model data log-likelihood and predictive performance do not change.
\end{proof}
In practice, it is uncommon for $\boldsymbol{\beta}_{c_1}$ and $\boldsymbol{\beta}_{c_2}$ to be exactly identical. However, as outlined in Algorithm~\ref{alg:refine}, we can define a similarity measure and apply a threshold to merge subgroups, using a performance metric to guide this process—as demonstrated in the following section.

\newpage

\section{Multi-level Interpretability of Survival Function}\label{app:multilevel}

Population-level survival functions can be computed by:

\begin{align}
     S(t \mid x_d; \phi_d)
    &= \expp{- \int_0^t \lambda(s\mid x_d; \phi_d)\diff{s}}\label{eq:poplevels}
\end{align}

This function outputs a probability value, between 0 and 1, and is the same across the population for the same $x_d$ value.

Individual-level survival function is the composition of population-level survival functions exponentiated by patient-specific weights, $\cp{d}{\xb; \theta, \boldsymbol{\beta}}$:

\begin{align}
     S(t \mid \xb; \theta, \boldsymbol{\beta}, \Phi)
    &= \expp{- \sum_{d=1}^D \left(\sum_{c=1}^C \cp{d}{c; \boldsymbol{\beta}}\cp{c}{\xb;\theta}\lambda(t \mid x_d; \phi_d) \diff{s} \right)}\\
    &= \prod_{c=1}^C \left(\prod_{d=1}^D \left({\expp{-\lambda(s\mid x_d; \phi_d)\diff{s}}}\right)^{{\beta_{dc}}} \right)^{{f_{\theta_c}(\xb)}}\\
    &= \prod_{d=1}^D \expp{-\lambda(s\mid x_d; \phi_d)\diff{s}}^{\cp{d}{\xb; \theta, \boldsymbol{\beta}}}
\end{align}

Note that, $ {\expp{-\lambda(s\mid x_d; \phi_d)\diff{s}}}$ is the same across the population for the same $x_d$. ${\beta_{dc}}$ adjusts the latter for subgroup (\ie for a given subgroup $c$ the survival function differs by some exponent ${\beta_{dc}}$). Finally, ${f_{\theta_c}(\xb)}$ modulates the subgroup weights given a patient covariates. 

\newpage

\section{Population Level Curves by Different Runs}\label{app:diffruns}

In this section, we compare the population-level survival functions of TimeNAM (\textbf{top}) and ADHAM (\textbf{bottom}) on \textbf{SUPPORT} dataset over different runs (each with identical random seeds).\footnote{While it is possible to compare individual curves, we focus on population-level interpretability over different model runs as it provides a higher degree of information across the entire dataset rather than select patients.} Higher values imply longer time-to-event (\ie lower risk of observing the event within time $t$). Survival functions for TimeNAM and ADHAM both capture the well-established trend of increasing risk with age (\ie survival probability decreases with age). ADHAM also captures the well-known physiological trends in heart rate and temperature. In particular, ADHAM highlights the values linked to normal ranges (\eg 36 - 37.5 $^{\circ}$C for temperature and 60 - 100 bpm for heart rate~\citep{tan2024association}), where risk values are lower, while it is harder to observe this for TimeNAM. \emph{The results are consistent for different runs.}

\paragraph{Run 1.} $\;$

\begin{figure}[H]
\centering
\includegraphics[width=\linewidth]{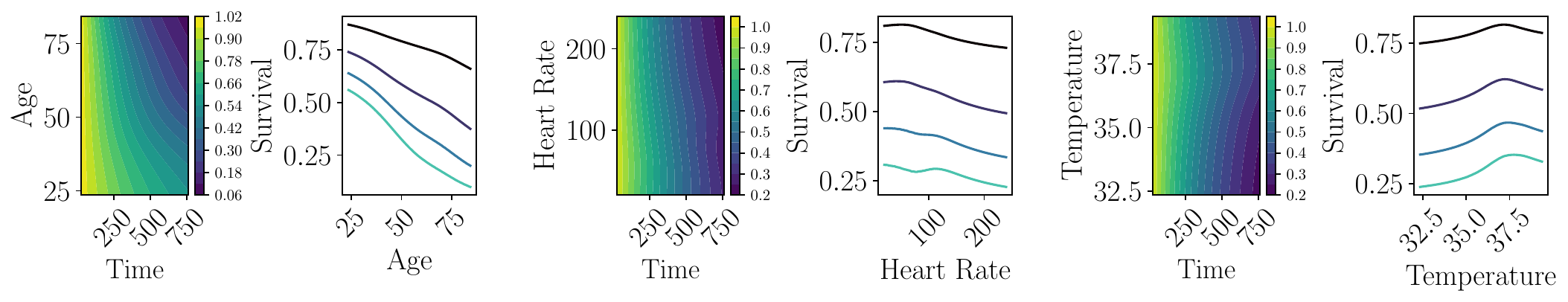}
\end{figure}
\begin{figure}[H]
\centering
\includegraphics[width=\linewidth]{figures/population_curves_dha3_fold_0.pdf}
\caption{Covariate-specific \emph{population-level} risk functions of TimeNAM (\textbf{top}) and ADHAM (\textbf{bottom}) trained on \textbf{SUPPORT} dataset on run 1.}
\end{figure}

\newpage

\paragraph{Run 2.} $\;$

\begin{figure}[H]
\centering
\includegraphics[width=\linewidth]{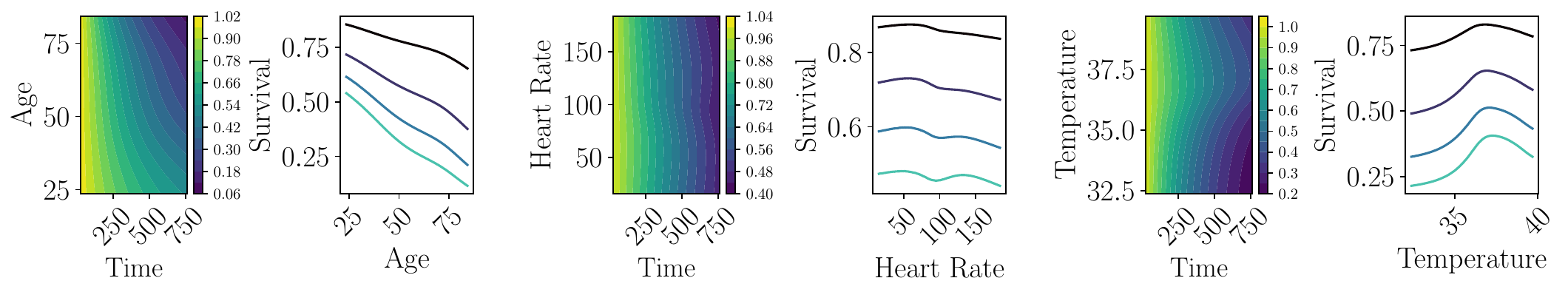}
\end{figure}
\begin{figure}[H]
\centering
\includegraphics[width=\linewidth]{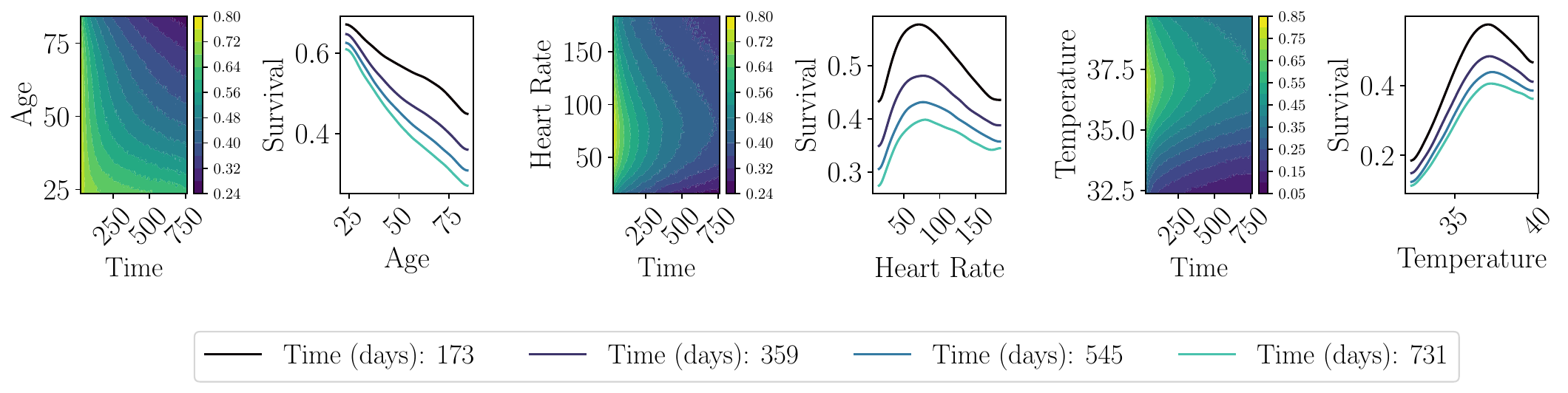}
\caption{Covariate-specific \emph{population-level} risk functions of TimeNAM (\textbf{top}) and ADHAM (\textbf{bottom}) trained on \textbf{SUPPORT} dataset on run 2.}
\end{figure}

\paragraph{Run 3.}$\;$

\begin{figure}[h!]
\centering
\includegraphics[width=\linewidth]{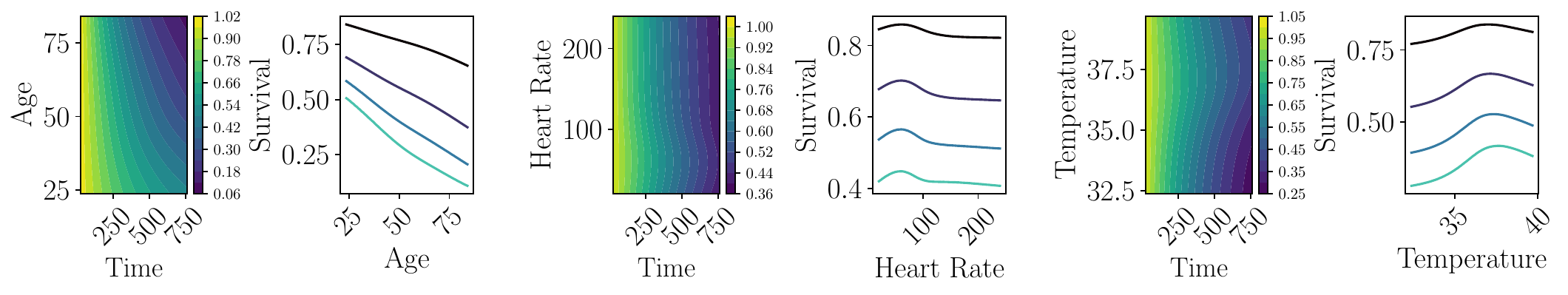}
\end{figure}
\begin{figure}[h!]
\centering
\includegraphics[width=\linewidth]{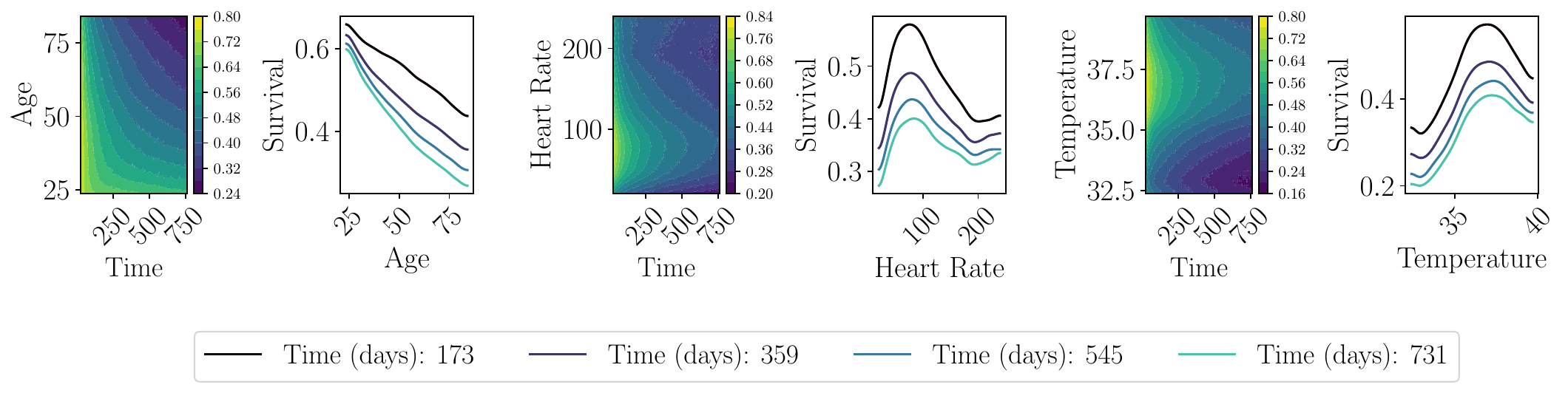}
\caption{Covariate-specific \emph{population-level} risk functions of TimeNAM (\textbf{top}) and ADHAM (\textbf{bottom}) trained on \textbf{SUPPORT} dataset on run 3.}
\end{figure}

$\;$

$\;$

$\;$

\paragraph{Run 4.}$\;$

\begin{figure}[h!]
\centering
\includegraphics[width=\linewidth]{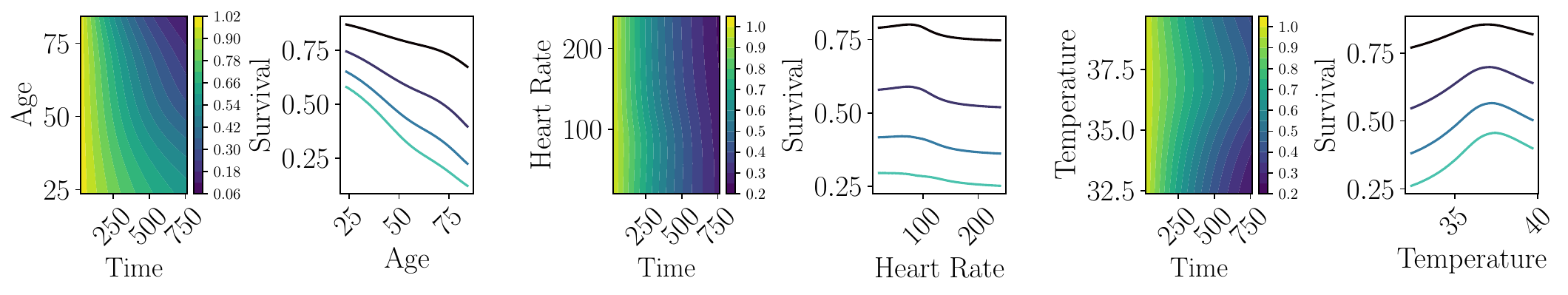}
\end{figure}
\begin{figure}[h!]
\centering
\includegraphics[width=\linewidth]{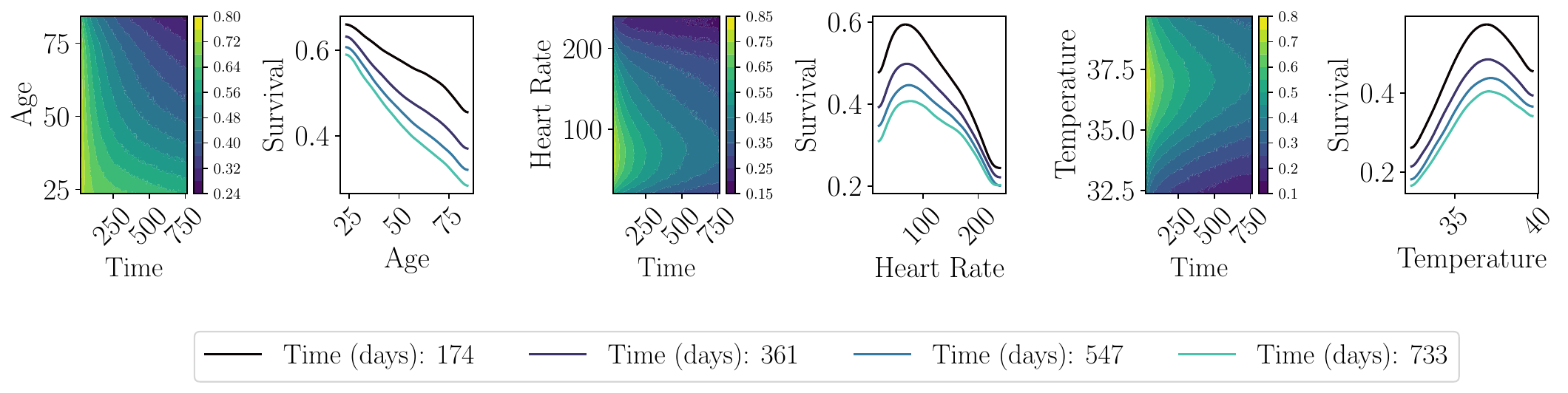}
\caption{Covariate-specific \emph{population-level} risk functions of TimeNAM (\textbf{top}) and ADHAM (\textbf{bottom}) trained on \textbf{SUPPORT} dataset on run 4.}
\end{figure}

\paragraph{Run 5.}$\;$

\begin{figure}[h!]
\centering
\includegraphics[width=\linewidth]{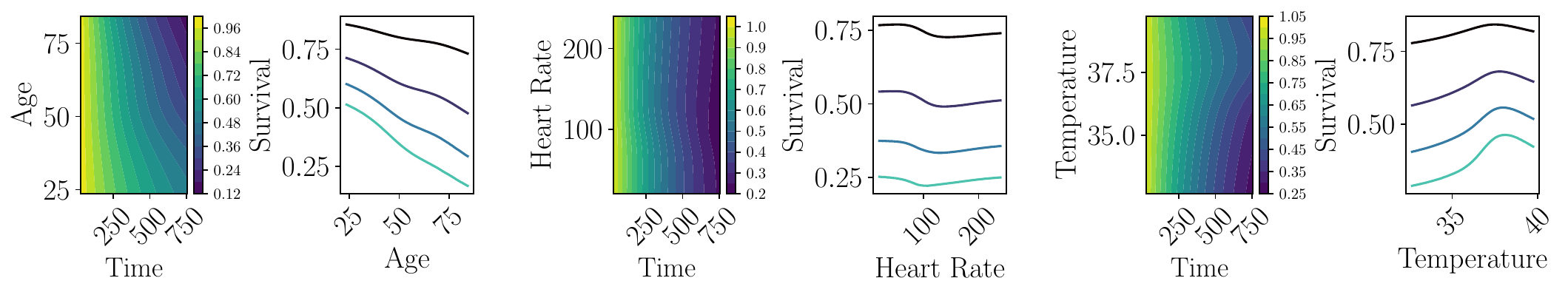}
\end{figure}
\begin{figure}[h!]
\centering
\includegraphics[width=\linewidth]{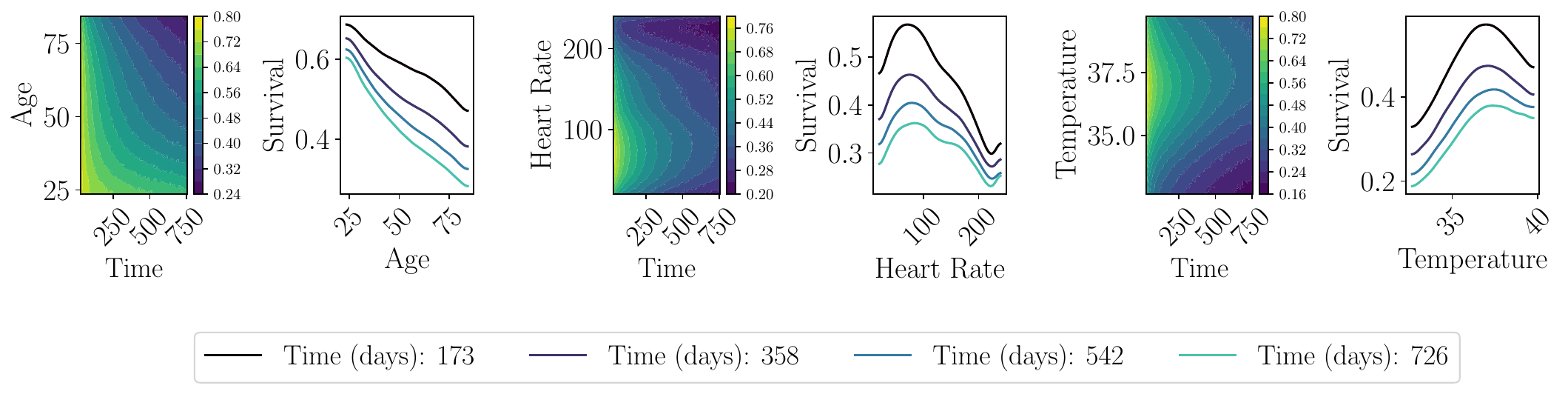}
\caption{Covariate-specific \emph{population-level} risk functions of TimeNAM (\textbf{top}) and ADHAM (\textbf{bottom}) trained on \textbf{SUPPORT} dataset on run 5.}
\end{figure}

\newpage

\section{Experimental Setup Details}\label{app:experimental}

In this section, we explain our datasets in detail.

\subsection{Dataset Details}\label{app:dataset}

\paragraph{SUPPORT.} The Study to Understand Prognoses Preferences Outcomes and Risks of Treatment dataset~\citep{knaus1995support}. After preprocessing with the~\texttt{PyCox} library~\citep{havakvpy36:online}, there are 8,873 patients and 23 covariates with a median follow-up of 231 days and a censoring rate of 31.9\%.


\paragraph{FLCHAIN.} Data collected from a controlled trial in Olmsted County, Minnesota, comprised of assays of serum free light chain (FLCHAIN) and mortality data~\citep{dispenzieri2012use}. There are 6524 patients with 16 covariates with a median follow-up of 4,303 days and a censoring rate of 70\% after preprocessing with~\texttt{PyCox}.

\paragraph{CKD.} Electronic health record data from a large urban hospital is used for this dataset. It comprises a cohort of patients with incident chronic kidney disease (CKD) where the event of interest is in-hospital diagnosis of acute kidney injury. The dataset contains 10,173 patients and 33 covariates. The median follow-up days and censoring rate are 67 days and 64\%, respectively. This dataset was used in~\citep{ketenci2023maximum}, one of our comparator models.

\subsection{SUPPORT Dataset Abbreviations}\label{app:covariates}

Here, we explain the abbreviations used in the \textbf{SUPPORT} dataset for Figures \ref{fig:motivation}, \ref{fig:clusters}, and \ref{fig:hazard}.

\begin{figure}[!htb] \raggedright
\begin{flushleft}\raggedright\hspace{-10ex}
 \includegraphics[width=0.9\linewidth]{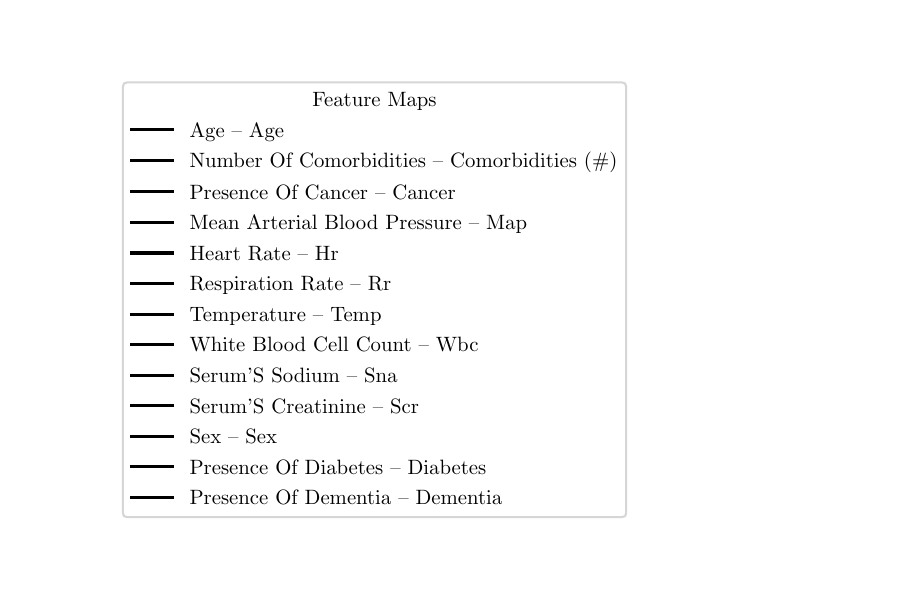}
\end{flushleft}
\end{figure}

\vspace{-5ex}

\subsection{Baseline Model Details} \label{app:baselines}

\paragraph{CoxPH.} The semi-parametric Cox proportional hazards model~\citep{cox1972regression}. Parameter learning is carried out by optimizing the partial log-likelihood.

\paragraph{DeepSurv.} A semi-parametric survival model that improves the Cox Proportional Hazards model by employing neural networks, thereby establishing a non-linear proportional hazards function~\citep{katzman2018deepsurv}.

\paragraph{Random Survival Forests (RSF).} An extension of random forests that fits multiple trees to survival data by bagging and using the cumulative hazard function computed by the Nelson-Aalen estimator ~\citep{ishwaran2008random}. 

\paragraph{DeepHit.} A discrete-time survival model parameterized by neural networks with a softmax output layer. DeepHit uses cross-entropy loss combined with a ranking loss~\citep{lee2018deephit}. 

\paragraph{Cox-Time.} A semi-parametric method that extends Cox analysis beyond proportional hazards. Cox-Time uses neural networks to parameterize the hazard function. Parameter estimation is done by optimizing a biased approximation of the partial log-likelihood ~\citep{kvamme2019time}.

\paragraph{TimeNAM \& TimeNA2M.} TimeNA2M extends Cox-Time by using a neural additive hazard function that explicitly captures both main effects and pairwise (second-order) interactions between covariates~\citep{agarwal2021neural, peroni2022extending}.

\paragraph{Deep Survival Machines (DSM).} A parametric survival model that uses a mixture of Weibull and log-normal distributions. Mixture assignments and time-to-event distributions are parameterized by neural networks conditioned on covariates. Parameter estimation is done by optimizing the ELBO, where the expectation is taken with respect to the conditional model prior ~\citep{nagpal2021dsm}.

\paragraph{Deep Cox Mixtures (DCM).} An semi-parametric extension of DSM where each time-to-event density is modeled by DeepSurv. DCM assumes that the hazards within subgroups is proportional. Parameter estimation is done by the Expectation-Maximization (EM) algorithm and fitting polynomial splines to baseline hazards ~\citep{nagpal2021dcm}. 

\paragraph{Deep Hazard Analysis (DHA).} A fully parametric survival analysis approach that directly models the hazard function. The intractable cumulative hazard integral is handled by importance sampling and learning is performed by exact log-likelihood optimization ~\citep{ketenci2023maximum}.

\subsection{Hyperparameter Details}\label{app:hyper}

All models have an equal training length of 4000 epochs. We pick the best-performing model with respect to their validation loss. The hyper-parameter spaces of each benchmark model are listed below.

\paragraph{CoxPH.}

    \begin{itemize}
        \item[] 
            \textquoteleft alpha': [0, 1e-3, 1e-2, 1e-1],
    \end{itemize}

\paragraph{DeepSurv.}

    \begin{itemize}
        \item[] 
        
            \textquoteleft lr' : [5e-4, 1e-3],

            \textquoteleft batch\_size': [256, 512, 1024],

            \textquoteleft weight\_decay': [0, 1e-8, 1e-6, 1e-3, 1e-1],

            \textquoteleft nodes\_':[128, 256, 512],

            \textquoteleft layers\_': [2, 3],

            \textquoteleft dropout': [0, 1e-1, 2e-1, 4e-1, 5e-1],

    \end{itemize}

\paragraph{RSF.}

    \begin{itemize}
        \item[] 
                \textquoteleft max\_depth' : [None, 5],

                \textquoteleft n\_estimators' : [50, 100, 150, 200, 150],
                
                \textquoteleft max\_covariates' : [50, 75, sqrt(d), d//2, d],

                \textquoteleft min\_samples\_split' : [10, 150, 200, 250],
                
    \end{itemize}

\textquoteleft max\_depth':None means that the expansion continues until all leaves are pure.

\paragraph{DSM.}

    \begin{itemize}
        \item[] 
        
         \textquoteleft k \textquoteright : [3, 4, 6],
         
         \textquoteleft distribution' : [\textquoteleft Weibull', \textquoteleft LogNormal'],
        
        \textquoteleft learning\_rate' : [1e-4, 5e-4, 1e-3],

        \textquoteleft nodes\_' : [48, 64, 96, 256],

        \textquoteleft hidden\_layers\_': [1, 2, 3],
        
        \textquoteleft discount': [1/3, 3/4, 1],
        
        \textquoteleft batch\_size': [128, 256],

    \end{itemize}

\paragraph{DCM.}

    \begin{itemize}
        \item[] 
        
            \textquoteleft k' : [3, 4, 6],
            
            \textquoteleft nodes\_' : [48, 64, 96, 256],
            
            \textquoteleft hidden\_layers\_': [1, 2, 3],
            
            \textquoteleft batch\_size': [128, 256],
            
            \textquoteleft use\_activation': [True, False],

    \end{itemize}
    
\paragraph{Deep-Hit.}

    \begin{itemize}
        \item[] 
        
            \textquoteleft lr' : [5e-4, 1e-3],

            \textquoteleft batch\_size': [256, 512, 1024],
            
            \textquoteleft weight\_decay': [0, 1e-8, 1e-6, 1e-3, 1e-1],

            \textquoteleft nodes\_':[128, 256, 512],

            \textquoteleft hidden\_layers\_': [2, 3],

            \textquoteleft dropout': [0, 1e-1, 2e-1, 4e-1, 5e-1],

            \textquoteleft alpha': [1e-1, 2e-1, 4e-1, 8e-1, 1],
  
            \textquoteleft sigma' : [1e-1, 2.5e-1, 4e-1, 8e-1, 1, 2, 10],

            \textquoteleft num\_durations' : [10, 50, 100],

    \end{itemize}

\paragraph{Cox-Time.}

    \begin{itemize}
        \item[] 
        
            \textquoteleft lr' : [5e-4, 1e-3],

            \textquoteleft batch\_size': [256, 512, 1024],
            
            \textquoteleft weight\_decay': [0, 1e-8, 1e-6, 1e-3, 1e-1],

            \textquoteleft nodes\_':[128, 256, 512],

            \textquoteleft hidden\_layers\_': [1,2,3],

            \textquoteleft dropout': [0, 1e-1, 2e-1, 4e-1, 5e-1],

            \textquoteleft lambda': [0, 1e-3, 1e-2, 1e-1],
  
            \textquoteleft log\_duration' : [True, False],

    \end{itemize}

\paragraph{TimeNAM \& TimeNA2M.}

    \begin{itemize}
        \item[] 
        
            \textquoteleft lr' : [5e-4, 1e-3],

            \textquoteleft batch\_size': [256, 512, 1024],
            
            \textquoteleft weight\_decay': [0, 1e-8, 1e-6, 1e-3, 1e-1],

            \textquoteleft nodes\_':[128, 256, 512],

            \textquoteleft hidden\_layers\_': [1,2,3],

            \textquoteleft dropout': [0, 1e-1, 2e-1, 4e-1, 5e-1],

            \textquoteleft lambda': [0, 1e-3, 1e-2, 1e-1],
  
            \textquoteleft log\_duration' : [True, False],

    \end{itemize}

\paragraph{DHA.}

    \begin{itemize}
        \item[] 
        
            \textquoteleft lr' : 1e-3,

            \textquoteleft batch\_size': 512,

            \textquoteleft imps\_size': 64,

            \textquoteleft layer\_norm' : True,
            
            \textquoteleft weight\_decay':0,

            \textquoteleft nodes\_': 100

            \textquoteleft layers\_': 3,

            \textquoteleft dropout': 0.15,

            \textquoteleft act': elu,
    \end{itemize}

\paragraph{ADHAM.}

    \begin{itemize}
        \item[] 
        
            \textquoteleft lr' : 1e-3,

            \textquoteleft batch\_size': 512,

            \textquoteleft imps\_size': 64,

            \textquoteleft layer\_norm' : True,
            
            \textquoteleft weight\_decay':0,

            \textquoteleft nodes\_': 100

            \textquoteleft layers\_': 3,

            \textquoteleft dropout': [0, 0.15],

            \textquoteleft act': elu,

            \textquoteleft n\_mixtures': 100,

            \textquoteleft add\_const': [True, False],

    \end{itemize}

For DHA and ADHAM, we use the architecture A1 defined in~\citep{ketenci2023maximum}.

\newpage

\subsection{Evaluation Metric Details}\label{app:metric}

In this section, we explain the evaluation metrics.

\paragraph{Concordance Index (C-Index).} The C-Index measures how well a model ranks individuals with respect to their risk. We use the inverse probability of censoring weighting (IPCW)-based C-Index proposed by~\cite{uno2011c}, which evaluates the agreement between predicted and observed orderings of events, with a time cutoff $t$. It is defined as:

\begin{equation}
  \mathrm{C}(t) =  \cp { S(t | \xb_i) < S(t|\xb_j) }{ \delta_i = 1, t _i < t _j, t _i<t }.
\end{equation}

Larger C-Index is better.




\paragraph{Brier Score (BS).} The Brier Score~\citep{brier1950verification} measures the calibration of predicted survival probabilities. It is defined as the expected squared difference between the predicted survival probability and the event occurrence indicator. To handle censored data, we use the extension of the Brier Score proposed by~\citep{graf1999assessment}, which employs inverse probability of censoring weighting (IPCW)\footnote{Although, IPCW can fail to provide accurate estimates for censored individuals when there are no comparable individuals who experienced the event afterward, it is a widely used method to correct for censoring~\citep{qi2023effective}.}. The definition is:

\begin{equation}
   \mathrm{BS}(t) = \Ex{t_i, \xb_i \sim \mathcal{D}}{(I_{t_i>t} - S(t|\xb_i))^2}.
\end{equation}

Smaller BS is better.




\paragraph{Area Under the Receiver Operating Characteristic (AUROC).} The time-dependent AUROC at a fixed cutoff time  measures the model's ability to distinguish between individuals who experience the event by time $t$ and those who do not. It is defined as the probability that the predicted risk (i.e., one minus the survival probability) is higher for an individual who experiences the event before or at time $t$ than for one who survives beyond $t$:

\begin{equation}
   \mathrm{AUC}(t) = \cp{S(t|\xb_i) \leq  S(t|\xb_j)}{ t_i \leq t, t_j > t}.
\end{equation}

Larger AUROC is better.

\newpage

\section{The Standard Error of the Sample Mean}\label{app:err}

In this section, we demonstrate the standard error results.

\begin{table}[h]
\center
\begin{adjustbox}{width=0.9\textwidth}
\begin{tabular}{cccccccccc}
\hline
& \multicolumn{9}{c}{\textbf{SUPPORT} Dataset}\\ \hline
\multicolumn{1}{c|}{} & \multicolumn{3}{c|}{25$^\text{th}$ Quantile} & \multicolumn{3}{c|}{50$^\text{th}$ Quantile} & \multicolumn{3}{c}{75$^\text{th}$ Quantile} \\ \cline{2-10} 
\multicolumn{1}{c|}{\multirow{-2}{*}{Models}}& C-Index $\uparrow$ & BS $\downarrow$  & \multicolumn{1}{c|}{AUROC $\uparrow$} & C-Index $\uparrow$ & BS $\downarrow$  & \multicolumn{1}{c|}{AUROC $\uparrow$}  & C-Index $\uparrow$  & BS $\downarrow$    & AUROC $\uparrow$ \\
\hline
\multicolumn{1}{c|}{DeepSurv}& 0.008 & 0.002 & 0.009 & 0.004 & 0.002 & 0.005 & 0.004 & 0.001 & 0.005 \\
\multicolumn{1}{c|}{RSF}& 0.005 & 0.003 & 0.005 & 0.003 & 0.001 & 0.004 & 0.002 & 0.002 & 0.004 \\
\multicolumn{1}{c|}{DeepHit}& 0.009 & 0.003 & 0.008 & 0.005 & 0.003 & 0.006 & 0.005 & 0.002 & 0.006 \\
\multicolumn{1}{c|}{Cox-Time}& 0.013 & 0.002 & 0.012 & 0.007 & 0.002 & 0.008 & 0.004 & 0.001 & 0.004 \\
\multicolumn{1}{c|}{DSM}& 0.006 & 0.002 & 0.004 & 0.005 & 0.002 & 0.006 & 0.004 & 0.002 & 0.005 \\
\multicolumn{1}{c|}{DCM}& 0.009 & 0.003 & 0.008 & 0.006 & 0.002 & 0.006 & 0.005 & 0.002 & 0.005 \\
\multicolumn{1}{c|}{DHA}& 0.008 & 0.003 & 0.007 & 0.005 & 0.002 & 0.006 & 0.005 & 0.002 & 0.006 \\\hline
\multicolumn{1}{c|}{CoxPH}& 0.007 & 0.002 & 0.007 & 0.003 & 0.002 & 0.003 & 0.002 & 0.001 & 0.004 \\
\multicolumn{1}{c|}{TIMENAM}& 0.008 & 0.002 & 0.008 & 0.003 & 0.002 & 0.003 & 0.002 & 0.001 & 0.003 \\
\multicolumn{1}{c|}{TIMENA2M}& 0.009 & 0.003 & 0.009 & 0.004 & 0.002 & 0.004 & 0.004 & 0.001 & 0.003 \\
\multicolumn{1}{c|}{ADHAM ($\mathcal{R}$)}& 0.006 & 0.002 & 0.004 & 0.004 & 0.002 & 0.006 & 0.003 & 0.000 & 0.003 \\
\multicolumn{1}{c|}{ADHAM}& 0.008 & 0.003 & 0.005 & 0.004 & 0.001 & 0.006 & 0.003 & 0.001 & 0.004 \\
\hline     
\end{tabular}
\end{adjustbox}
\vspace{-1ex}
\caption{Standard error of the sample mean (SEM) results on the \textbf{SUPPORT} dataset.}
\label{tab:superr}
\end{table}
\begin{table}[h]
\center
\begin{adjustbox}{width=0.9\textwidth}
\begin{tabular}{cccccccccc}
\hline
& \multicolumn{9}{c}{\textbf{CKD} Dataset}\\ \hline
\multicolumn{1}{c|}{} & \multicolumn{3}{c|}{25$^\text{th}$ Quantile} & \multicolumn{3}{c|}{50$^\text{th}$ Quantile} & \multicolumn{3}{c}{75$^\text{th}$ Quantile} \\ \cline{2-10} 
\multicolumn{1}{c|}{\multirow{-2}{*}{Models}}& C-Index $\uparrow$ & BS $\downarrow$  & \multicolumn{1}{c|}{AUROC $\uparrow$} & C-Index $\uparrow$ & BS $\downarrow$  & \multicolumn{1}{c|}{AUROC $\uparrow$}  & C-Index $\uparrow$  & BS $\downarrow$    & AUROC $\uparrow$ \\
\hline
\multicolumn{1}{c|}{DeepSurv}& 0.005 & 0.004 & 0.006 & 0.005 & 0.004 & 0.006 & 0.007 & 0.004 & 0.007 \\
\multicolumn{1}{c|}{RSF}& 0.011 & 0.002 & 0.012 & 0.004 & 0.004 & 0.006 & 0.004 & 0.005 & 0.009 \\
\multicolumn{1}{c|}{DeepHit}& 0.010 & 0.003 & 0.010 & 0.004 & 0.004 & 0.005 & 0.007 & 0.005 & 0.009 \\
\multicolumn{1}{c|}{Cox-Time}& 0.005 & 0.004 & 0.008 & 0.005 & 0.005 & 0.007 & 0.005 & 0.006 & 0.008 \\
\multicolumn{1}{c|}{DSM}& 0.009 & 0.004 & 0.008 & 0.007 & 0.003 & 0.008 & 0.006 & 0.003 & 0.008 \\
\multicolumn{1}{c|}{DHA}& 0.008 & 0.004 & 0.008 & 0.008 & 0.004 & 0.009 & 0.005 & 0.004 & 0.008 \\ \hline
\multicolumn{1}{c|}{CoxPH}& 0.006 & 0.002 & 0.007 & 0.006 & 0.004 & 0.009 & 0.003 & 0.004 & 0.006 \\
\multicolumn{1}{c|}{TIMENAM}& 0.008 & 0.002 & 0.008 & 0.008 & 0.004 & 0.009 & 0.006 & 0.004 & 0.010 \\
\multicolumn{1}{c|}{TIMENA2M}& 0.011 & 0.003 & 0.011 & 0.005 & 0.006 & 0.006 & 0.005 & 0.008 & 0.008 \\
\multicolumn{1}{c|}{ADHAM ($\mathcal{R}$)}& 0.016 & 0.004 & 0.019 & 0.01 & 0.005 & 0.011 & 0.006 & 0.005 & 0.007 \\
\multicolumn{1}{c|}{ADHAM}& 0.008 & 0.004 & 0.008 & 0.007 & 0.004 & 0.009 & 0.007 & 0.004 & 0.010 \\
\hline                      
\end{tabular}
\end{adjustbox}
\vspace{0.2cm}
\caption{Standard error of the sample mean (SEM) results on the \textbf{CKD} dataset.}
\label{tab:ckderr}
\end{table}
\vspace{-1ex}
\begin{table}[h]
\center
\begin{adjustbox}{width=0.9\textwidth}
\begin{tabular}{cccccccccc}
\hline
& \multicolumn{9}{c}{\textbf{FLCHAIN} Dataset}\\ \hline
\multicolumn{1}{c|}{} & \multicolumn{3}{c|}{25$^\text{th}$ Quantile} & \multicolumn{3}{c|}{50$^\text{th}$ Quantile} & \multicolumn{3}{c}{75$^\text{th}$ Quantile} \\ \cline{2-10} 
\multicolumn{1}{c|}{\multirow{-2}{*}{Models}}& C-Index $\uparrow$ & BS $\downarrow$  & \multicolumn{1}{c|}{AUROC $\uparrow$} & C-Index $\uparrow$ & BS $\downarrow$  & \multicolumn{1}{c|}{AUROC $\uparrow$}  & C-Index $\uparrow$  & BS $\downarrow$    & AUROC $\uparrow$ \\
\hline
\multicolumn{1}{c|}{DeepSurv}& 0.004 & 0.001 & 0.004 & 0.005 & 0.001 & 0.005 & 0.004 & 0.002 & 0.005 \\
\multicolumn{1}{c|}{RSF}& 0.005 & 0.001 & 0.006 & 0.005 & 0.002 & 0.006 & 0.003 & 0.001 & 0.004 \\
\multicolumn{1}{c|}{DeepHit}& 0.004 & 0.002 & 0.004 & 0.007 & 0.002 & 0.007 & 0.005 & 0.002 & 0.006 \\
\multicolumn{1}{c|}{Cox-Time}& 0.004 & 0.004 & 0.005 & 0.006 & 0.012 & 0.007 & 0.004 & 0.024 & 0.005 \\
\multicolumn{1}{c|}{DSM}& 0.008 & 0.001 & 0.009 & 0.010 & 0.002 & 0.011 & 0.005 & 0.001 & 0.006 \\
\multicolumn{1}{c|}{DCM}& 0.003 & 0.002 & 0.003 & 0.009 & 0.004 & 0.010 & 0.007 & 0.003 & 0.008 \\
\multicolumn{1}{c|}{DHA}& 0.003 & 0.002 & 0.004 & 0.007 & 0.002 & 0.008 & 0.004 & 0.001 & 0.005 \\\hline
\multicolumn{1}{c|}{CoxPH}& 0.005 & 0.001 & 0.005 & 0.006 & 0.002 & 0.007 & 0.003 & 0.002 & 0.003 \\
\multicolumn{1}{c|}{TIMENAM}& 0.004 & 0.001 & 0.004 & 0.006 & 0.003 & 0.007 & 0.004 & 0.006 & 0.005 \\
\multicolumn{1}{c|}{TIMENA2M}& 0.006 & 0.001 & 0.007 & 0.007 & 0.006 & 0.008 & 0.003 & 0.015 & 0.004 \\
\multicolumn{1}{c|}{ADHAM ($\mathcal{R}$)}& 0.006 & 0.002 & 0.007 & 0.005 & 0.003 & 0.005 & 0.005 & 0.003 & 0.007 \\
\multicolumn{1}{c|}{ADHAM}& 0.008 & 0.002 & 0.008 & 0.008 & 0.002 & 0.008 & 0.006 & 0.003 & 0.007 \\
\hline                      
\end{tabular}
\end{adjustbox}
\vspace{0.2cm}
\caption{Standard error of the sample mean (SEM) results on the \textbf{FLCHAIN} dataset.}
\label{tab:flcerr}
\end{table}
\end{document}